%% file: main.tex
\documentclass{article}

\usepackage{microtype}
\usepackage{graphicx}
\usepackage{subcaption}
\usepackage{booktabs} 

\usepackage{hyperref}

\usepackage[accepted]{icml2026}

\usepackage{amsmath}
\usepackage{amssymb}
\usepackage{mathtools}
\usepackage{amsthm}

\usepackage[capitalize,noabbrev]{cleveref}

\usepackage[disable,textsize=tiny]{todonotes}

\input{math_commands}

\input{preamble}

\icmltitlerunning{\name: From Simple Mixture to Structural Aggregation in Mixture-of-Experts}

\begin{document}

\twocolumn[
  \icmltitle{\name: From Simple Mixture to Structural Aggregation in Mixture-of-Experts}



  \icmlsetsymbol{equal}{*}
  \icmlsetsymbol{intern}{\dag}

  \begin{icmlauthorlist}
    \icmlauthor{Jiarui Feng}{meta,washu,intern}
    \icmlauthor{Hanqing Zeng}{meta}
    \icmlauthor{Karish Grover}{cmu}
    \icmlauthor{Ruizhong Qiu}{uiuc}
    \icmlauthor{Yinglong Xia}{meta}
    \icmlauthor{Qiang Zhang}{meta}
    \icmlauthor{Qifan Wang}{meta}
    \icmlauthor{Ren Chen}{meta}
    \icmlauthor{Dongqi Fu}{meta}
    \icmlauthor{Jiayi Liu}{meta}
    \icmlauthor{Zhuokai Zhao}{meta}
    \icmlauthor{Xiangjun Fan}{meta}
    \icmlauthor{Benyu Zhang}{meta}
    \icmlauthor{Yixin Chen}{washu}

  \end{icmlauthorlist}

  \icmlaffiliation{meta}{Meta MRS}
  \icmlaffiliation{washu}{Washington University in St. Louis}
  \icmlaffiliation{cmu}{Carnegie Mellon University}
  \icmlaffiliation{uiuc}{University of Illinois Urbana-Champaign}

  \icmlcorrespondingauthor{Hanqing Zeng}{zengh@meta.com}
  \icmlcorrespondingauthor{Jiarui Feng}{jiaruifeng@meta.com}

  \icmlkeywords{large language models, mixture-of-experts, structural aggregation}

  \vskip 0.3in
]



\printAffiliationsAndNotice{\textsuperscript{\dag}Work done during internship at Meta MRS. }

\input{tex/abstract}
\input{tex/introduction}
\input{tex/preliminary}
\input{tex/methods}

\input{tex/experiments}

\input{tex/related_works}

\input{tex/conclusion_limitation}
\input{tex/impact_statement}

\bibliography{references}
\bibliographystyle{icml2026}

\newpage
\appendix
\onecolumn
\input{tex/app_proof}
\input{tex/app_implementation}
\input{tex/app_discussion}


\end{document}

%% file: math_commands.tex
\usepackage{amsmath,amsfonts,bm}

\def\eqref#1{equation~\ref{#1}}

\def\1{\bm{1}}

\DeclareMathAlphabet{\mathsfit}{\encodingdefault}{\sfdefault}{m}{sl}
\SetMathAlphabet{\mathsfit}{bold}{\encodingdefault}{\sfdefault}{bx}{n}



%% file: preamble.tex
\usepackage{amsmath}
\usepackage{amssymb}
\usepackage{mathtools}
\usepackage{amsthm}
\usepackage{graphicx}
\usepackage{subcaption}
\usepackage{outlines}
\usepackage[inline]{enumitem}
\usepackage{xparse}
\usepackage{xstring}
\usepackage{xspace}
\usepackage{tikz}
\usepackage{multirow}
\usepackage{threeparttablex}
\usepackage[disable,textsize=tiny]{todonotes}
\usepackage[table]{xcolor}
\usepackage{graphicx}
\usepackage{subcaption}
\usepackage{enumitem}
\usepackage{wrapfig,graphicx,calc}    
\usepackage{caption}                  

\usetikzlibrary{positioning}
\usepackage{booktabs}

\newcommand{\figref}[1]{Fig.~\ref{#1}}
\newcommand{\tabref}[1]{Table~\ref{#1}}
\newcommand{\eqnref}[1]{\text{Eq.}~\ref{#1}}
\newcommand{\secref}[1]{Section~\ref{#1}}
\newcommand{\appref}[1]{Appendix~\ref{#1}}
\newcommand{\proref}[1]{Proposition~\ref{#1}}
\newcommand{\theoref}[1]{Theorem~\ref{#1}}

\theoremstyle{plain}
\newtheorem{theorem}{Theorem}[section]
\newtheorem{proposition}[theorem]{Proposition}

\theoremstyle{definition}
\newtheorem{definition}[theorem]{Definition}

\theoremstyle{remark}

\newcommand{\jr}[1]{}
\newcommand{\HQ}[1]{}
\newcommand{\YL}[1]{}

\newcommand{\name}{\text{DAG-MoE}\xspace}

%% file: tex/abstract.tex
\begin{abstract}
Mixture-of-Experts (MoE) models have become a leading approach for decoupling parameter count from computational cost in large language models, yet effectively scaling MoE performance remains a challenge. Prior work shows that fine-grained experts enlarge the space of expert combinations and improve flexibility, but they also impose substantial routing overhead, creating a new scalability bottleneck. In this paper, we explore a complementary axis for scaling---how expert outputs are aggregated. We theoretically show that replacing the standard weighted-summation aggregation with structural aggregation expands the expert-combination space without altering the experts or router, and enables possible multi-step reasoning within a single MoE layer. To this end, we propose \name, a sparse MoE framework that employs a lightweight module to automatically learn the optimal aggregation structure among the selected experts. Extensive experiments under standard language modeling settings show that \name consistently improves performance in both pretraining and fine-tuning, surpassing traditional MoE baselines.
\end{abstract}

%% file: tex/introduction.tex
\section{Introduction}
Mixture-of-Experts (MoE) models~\citep{first_moe,gshard,switchtransformer} have become the dominant architecture for large-scale foundation models, including Large Language Models (LLMs). Compared with standard dense models, MoE decouples model size from computational cost by splitting a large dense network into multiple smaller experts, with a router dynamically selecting the top-$K$ most relevant experts for each input. This paradigm has been widely adopted in recent open-source LLMs and multimodal models~\citep{deepseekv3,qwen3,llama4,olmoe,unimoe}.

Despite the success of MoE, how to effectively scale it remains unclear. Prior work has examined several axes: improving the routing mechanism with more advanced algorithms~\citep{expert_routing,rmoe,xmoe,balancefree}; analyzing the relationship between MoE performance and sparsity~\citep{li2025can,ling_mini}; and studying the effect of expert granularity~\citep{peer,fine_gra_moe}. The granularity line, in particular, shows that MoE performance can be improved by increasing both the total number of experts and the number of active experts, while shrinking each individual expert. While promising, this strategy substantially raises router-side complexity and creates a new scaling bottleneck, so state-of-the-art MoE systems typically avoid extremely fine-grained configurations. \textbf{All of these efforts focus on the experts and the router, but overlook a third critical component: how expert outputs are aggregated.} In standard MoE, once the router selects the top-$K$ experts, the final representation is formed by a weighted sum using the router scores as coefficients. Because weighted summation is permutation-invariant, the output depends only on the multiset of selected experts, with no notion of ordering or interaction among them, which fundamentally constrains the framework's expressiveness.

In this paper, we propose \textbf{replacing simple mixing with structural aggregation}. While prior work largely studies how many experts to keep and which ones to pick, we ask a complementary third question: \emph{given the selected experts, how should their outputs be composed?} We organize the aggregation of the top-$K$ experts into a directed acyclic graph (DAG), assigning each expert a distinct structural role within the graph; expert outputs are then aggregated according to the DAG. We show that introducing such structural relationships effectively expands the expert combination space without modifying the experts or the router, and incurs no additional router-side complexity. Furthermore, the DAG enables the approximation of multi-step reasoning within a single MoE layer, which is beneficial for problems with inherent compositional structure such as dynamic programming. Building on this idea, we present \name, an MoE block that incorporates a lightweight DAG learning module: given the top-$K$ experts selected by the router, the module discovers an optimal aggregation structure among them and iteratively refines node representations along the resulting DAG. Across both language modeling and downstream tasks, \name consistently outperforms standard MoE architectures, demonstrating the effectiveness of structural aggregation over simple mixing. We release the code in \url{https://github.com/JiaruiFeng/DAG-MoE}.

\paragraph{Conflict of Interest Disclosure.} The authors declare no financial conflicts of interest related to this work.

%% file: tex/preliminary.tex
\section{Preliminaries}
\label{sec:preliminaries}
\textbf{Architecture of MoE.} In this paper, we consider standard transformer-based LLMs. Let $x \in \mathbb{R}^d$ denote the input token embedding, and let $\{E_i(\cdot)\mid i=1, \ldots, N\}$ be a set of $N$ expert networks, each implemented as an FFN (Feed-Forward Network) with inner hidden size $d_{r}$. Let $g_i(\cdot)$ be the router's sparse gating function, which selects the top-$K$ experts for the given input token. The output of the MoE layer is:
\begin{equation}
\label{eq:moe}
    y = \sum^{N}_{i=1}g_i(x)E_i(x),
\end{equation}
where the gating function is defined as:
\begin{equation}
g_i(x) = 
\begin{cases}
s_i, & s_i \in \text{TopK}(\{s_j\}_{j=1}^N, K), \\
0, & \text{otherwise},
\end{cases}
\  \ 
s_i = \delta(e_i^{\top} x).
\end{equation}
Here $\delta$ is the score function, typically implemented as Softmax or Sigmoid, and $e_i \in \mathbb{R}^d$ is a learnable vector associated with the $i$-th expert. Under this gating, only the top-$K$ experts contribute to the output for each token, while the remaining $N-K$ experts are deactivated.

\textbf{Granularity of MoE.} Two key hyperparameters of an MoE model are the total number of experts $N$ and the number of active experts $K$. Recent studies show that increasing the \emph{granularity} of experts---defined as $G=d_f/d_r$, where $d_f$ is the hidden size of the dense FFN counterpart and $d_r$ is the hidden size of each expert---can substantially improve MoE performance~\citep{peer, fine_gra_moe}. A fine-grained MoE keeps the total parameter budget fixed (e.g., $d_f$ fixed) while shrinking $d_r$, which permits a larger $N$ and $K$. Higher granularity dramatically enlarges the space of possible expert combinations: for instance, choosing top-$2$ out of $8$ experts gives only $\binom{8}{2}=28$ combinations, whereas choosing top-$4$ out of $16$ experts already yields $\binom{16}{4}=1{,}820$. However, scaling $N$ also expands the router-side parameter count and load-balance complexity, and state-of-the-art MoE systems therefore avoid extremely fine-grained configurations in practice, which motivates us to seek complementary axes for improving MoE.

%% file: tex/methods.tex
\begin{figure*}[t]
\centering
\includegraphics[width=\linewidth, trim=0 10cm 0 0, clip]{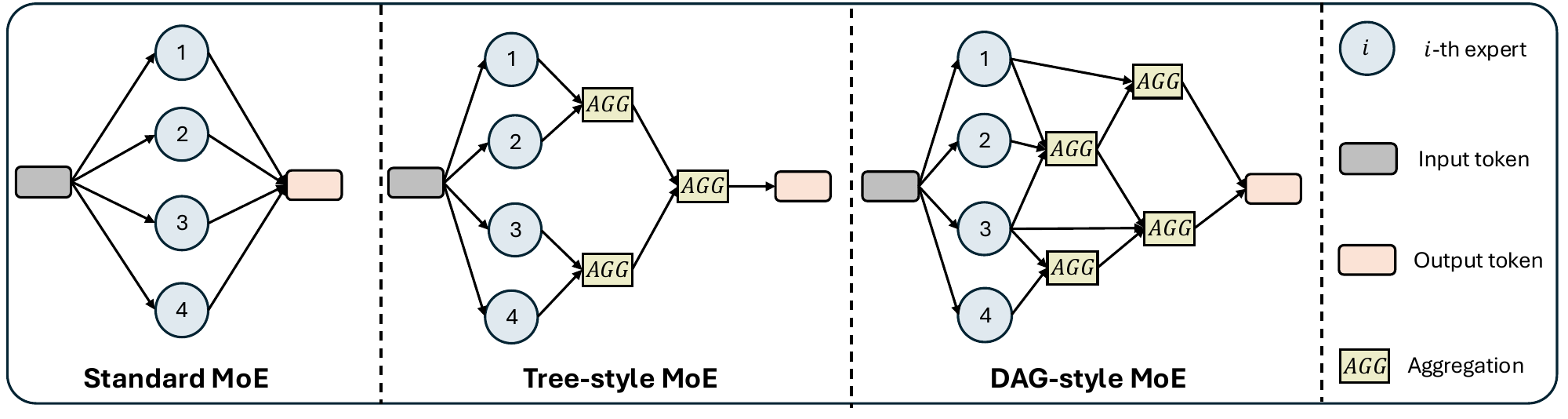}
\vspace{-18pt}
\caption{Comparison of different mixing structures in MoE.}
\vspace{-15pt}
\label{fig:moe_structure}
\end{figure*}

\section{\name: Harnessing the power of structure in MoE}
\subsection{From simple mixture to structural aggregation}
Given the limitations of fine-grained MoE scaling, we explore complementary ways to enhance MoE capacity. We begin by examining the expressiveness of the standard MoE architecture. As shown in \eqnref{eq:moe}, expressiveness is determined by two components: the expert output set $\mathcal{F} = \{E_i(x)\mid i = 1, \ldots, N\}$ and the score set $\mathcal{S} = \{g_i(x)\mid i = 1, \ldots, N\}$. This further breaks down into the per-element capacity of each $E_i(x)$ and $g_i(x)$, and the cardinalities $|\mathcal{F}|=|\mathcal{S}|=N$. Each $g_i(x)$ is a scalar with no learnable parameters of its own, while each $E_i(x)$ is an FFN of identical architecture whose capacity is determined by its parameter count. Since increasing each expert's parameter count raises practical concerns (compute, memory, deployment), we assume throughout this discussion that the total parameter budget for $\mathcal{F}$ is fixed.

Under this constraint, improving expressiveness reduces to scaling $N$, which is precisely the granularity axis discussed in the previous section. Yet even with $\mathcal{F}$ and $\mathcal{S}$ fully determined, the overall expressiveness is still bounded by the functional form of \eqnref{eq:moe}: once the router has chosen the top-$K$ experts, the output is fully determined, since the weighted sum is permutation-invariant in the selected experts. This raises a natural question: \textbf{\textit{is there a more effective way than summation to combine information from the selected experts?}}

To answer this question, we view the experts from a structural perspective and consider the case where the router selects four experts. As shown on the left of \figref{fig:moe_structure}, the aggregation in standard MoE can be interpreted as a computational graph in which each expert corresponds to an isolated node with no edges. In this setting, permuting the order of experts does not change the graph, since all nodes share the same structural role. Let $AGG$ denote a generic aggregation function, and consider a tree-structured computation over the four experts. As illustrated in the middle of \figref{fig:moe_structure}, the experts are arranged into a depth-2 hierarchy: at the first level, experts 1 and 2 are combined by $AGG$ to produce an intermediate representation, while experts 3 and 4 are simultaneously merged by another instance of $AGG$. At the second level, the two intermediate outputs are further combined by yet another instance of $AGG$. In this setup, swapping experts 1 and 3 changes the final output, because the second-level operations now act on different inputs. Hence, experts 1 and 3 occupy distinct structural roles within the expert graph. More generally, the selected experts can be organized into a directed acyclic graph (DAG), with an example shown in the right panel of \figref{fig:moe_structure}. Permuting the experts or choosing a different DAG changes the computational graph and yields a different output. Thus, even without modifying the experts or the router, the expressiveness of the MoE architecture can be substantially enhanced through structural composition alone. For a fixed $K$, the number of possible DAGs grows exponentially, offering a rich design space of structural configurations. We refer to this form of structural aggregation as \textbf{DAG-style MoE}.

\subsection{A general formulation of DAG-style MoE}
We now formally define and analyze DAG-style MoE. The expert and router configurations remain identical to standard MoE, and we denote the list of top-$K$ experts selected by the router as $\boldsymbol{k}=[k\mid s_k\in \text{TopK}(\{s_j\}_{j=1}^N, K)]$. Let $\mathcal{G}(K)$ be the set of all possible DAGs constructed over $K$ experts, where each DAG is represented as $G=(\mathcal{V},\mathcal{A})$. Here $\mathcal{V}$ is the set of nodes, and each node $v\in\mathcal{V}$ corresponds to an output representation that is either an initial expert output or an intermediate result produced by the aggregation function $AGG$. Each node is indexed by $v=(l,i)$, where $l$ is its depth and $i$ its index at depth $l$; for example, the initial output of expert 1 corresponds to node $(0,1)$. The set $\mathcal{A}$ is the adjacency list specifying the connections within the DAG: $\mathcal{A} = \{A_i^l \mid l = 1,\ldots,L;\ i = 1,\ldots,n(l)\}$, where $L$ is the maximum depth and $n(l)$ is the number of nodes at depth $l$. Each $A_i^l$ is the set of predecessor nodes pointing into node $(l,i)$, where $(k, j) \in A^{l}_i$ means that node $(k, j)$ connects to node $(l, i)$ with $k<l$. For example, the adjacency list of the tree graph in the middle of \figref{fig:moe_structure} is $\mathcal{A}=\{A_1^1=\{(0, 1), (0, 2)\},\, A_2^1=\{(0, 3), (0,4)\},\, A_1^2=\{(1, 1), (1, 2)\}\}$.

Let $x_i^l$ denote the learned representation of node $v=(l,i)$. For a given DAG $G \in \mathcal{G}(K)$, the corresponding computation in DAG-style MoE is formulated as:
\begin{equation}
    \label{eq:general_dag_moe_init}
    x_i^0 = g_{\mathbf{k}[i]}(x)E_{\mathbf{k}[i]}(x),\quad i=1,\ldots,K,
\end{equation}
\begin{equation}
\begin{aligned}
    \label{eq:general_dag_moe_agg}
    x_i^l &= AGG(\{x_j^k\mid (k, j)\in A_i^l\}), \\  &\quad i=1,\ldots, n(l),\; l=1,\ldots, L-1,
\end{aligned}
\end{equation}
\begin{equation}
    \label{eq:general_dag_moe_output}
    y = AGG(\{x_j^k \mid (k, j) \in A^L_{1}\}).
\end{equation}

Here we assume that the final depth $L$ contains a single node $(L, 1)$ whose connections are specified by $A^L_1$. 

\subsection{Theoretical analysis of DAG-style MoE}
\label{sec:theory}

With the formulation in place, we present three theoretical results that build on each other to characterize the advantages of DAG-style MoE over standard MoE. \proref{pro:dag_injective} first establishes that DAG-style MoE can injectively encode any structure $G\in\mathcal{G}(K)$, which we then use in \theoref{theo:dag_expressive} to show that DAG-style MoE is strictly more expressive than standard MoE. Beyond raw expressiveness, \theoref{theo:dag_moe_dp} shows that this added capacity translates into a concrete problem-solving advantage: a single DAG-style MoE layer can simulate a full pass of dynamic programming, which a standard MoE layer cannot. Together, these results motivate replacing the permutation-invariant weighted sum with structural aggregation.

\textbf{Theoretical expressiveness.} We assume throughout that $AGG$ is a sufficiently powerful injective function over multiset inputs, which can be readily implemented as an MLP combined with sum, min, or max~\citep{deepset, gin}. Under this assumption, we show that DAG-style MoE is strictly more expressive than standard MoE.

\begin{proposition}
\label{pro:dag_injective}
Given a top-$K$ expert list $\mathbf{k}$, any DAG-style MoE satisfying \eqnref{eq:general_dag_moe_init}--\eqnref{eq:general_dag_moe_output} can injectively encode any $G \in \mathcal{G}(K)$ if $AGG$ is injective.
\end{proposition}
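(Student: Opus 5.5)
We read the statement in the sense of graph neural networks: with the expert list $\mathbf{k}$ fixed (so every leaf $x_i^0=g_{\mathbf{k}[i]}(x)E_{\mathbf{k}[i]}(x)$ is fixed), the map $G\mapsto y_G$ from $\mathcal{G}(K)$ to outputs defined by \eqnref{eq:general_dag_moe_init}--\eqnref{eq:general_dag_moe_output} should be injective. Two distinct DAGs must then produce distinct outputs, up to the natural identification of DAGs that describe the same computation. We also assume the leaf representations are pairwise distinguishable, i.e.\ $x_i^0\neq x_j^0$ for $i\neq j$. This holds generically, and can be forced by letting the leaves carry their position in $\mathbf{k}$ through an injective positional tag; without it the claim fails trivially.

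The strategy is structural induction on depth, in the spirit of the Weisfeiler--Lehman / GIN injectivity argument. For each node $v=(l,i)$ define its \emph{unfolding} $T(v)$ as a rooted tree. For a leaf, $T(v)$ is the expert label $\mathbf{k}[i]$. For an internal node, $T(v)$ is the multiset $\{\!\{T(u): u\in A_i^l\}\!\}$.

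\textbf{Claim.} There is an injective map $\phi$ such that $x_v=\phi(T(v))$ for every node $v$ of every $G\in\mathcal{G}(K)$, with the same $\phi$ used across all DAGs.

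The proof of the claim is by induction on $l$.
\begin{itemize}
\item \emph{Base case $l=0$.} This is exactly the distinct-leaves assumption.
\item \emph{Inductive step.} Suppose $x_v=AGG(M)$ and $x_{v'}=AGG(M')$ with $x_v=x_{v'}$. Injectivity of $AGG$ on multisets gives $M=M'$. The induction hypothesis, applied elementwise, then gives $\{\!\{T(u)\}\!\}=\{\!\{T(u')\}\!\}$, i.e.\ $T(v)=T(v')$. Predecessors may sit at different depths $k<l$, so the hypothesis has to be applied across all depths below $l$ at once. We therefore use strong induction, and note that a single $AGG$ acts on a common domain, so there are no depth-wise collisions.
\end{itemize}
Applying the claim to the root $(L,1)$ gives $y_G=\phi(T(\text{root}_G))$.

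The last step recovers $G$ from $T(\text{root}_G)$. This is the main obstacle, because unfolding forgets sharing: a node with several successors appears as duplicated subtrees. The plan is to reconstruct the DAG canonically by merging identical subtrees (hash-consing). Every node of $G$ is then identified with a distinct subterm, and it is determined by its predecessor multiset. Two nodes with identical predecessor multisets compute identical values, so they are computationally indistinguishable and can be identified under the natural equivalence on $\mathcal{G}(K)$. With that equivalence made explicit, the canonical reconstruction is a left inverse of $G\mapsto T(\text{root}_G)$, and composing with $\phi^{-1}$ gives injectivity of $G\mapsto y_G$.

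If the paper intends $\mathcal{G}(K)$ to contain only reduced DAGs, in which every node is used by the output and no two nodes share a predecessor set, the identification is literal equality. I would state this convention at the start of the proof.
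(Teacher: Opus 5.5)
\textbf{Verdict.} Your route is a valid alternative to the paper's, but the central Claim, as you state it, fails. The gap is fixable with one extra hypothesis.

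\textbf{Comparison with the paper.} The paper adds the token $x$ as a single source node $(0,0)$. It rewrites every other node, including the expert outputs, in D-VAE form and then cites D-VAE's injectivity theorem. You instead run a GIN/WL-style induction on unfolding trees directly and undo the unfolding by hash-consing. Your version is more self-contained. It also makes explicit two caveats that the reduction leaves implicit: the expert outputs must be pairwise distinct, and injectivity can only hold up to identifying DAGs that describe the same computation. D-VAE itself is framed in terms of encoding the computation a DAG represents.

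\textbf{The gap.} Your base case compares two leaves and your inductive step compares two $AGG$ outputs. No step rules out a leaf value coinciding with an $AGG$ output. Your remark about a common domain only excludes collisions between aggregation nodes at different depths. Injectivity of $AGG$ says nothing about its range, and under your hypotheses the Claim is false.

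Take $K=4$ with pairwise distinct leaf values $a,b,c,d$. An injective $AGG$ may satisfy $AGG(\{\!\{a\}\!\})=c$ and $AGG(\{\!\{c\}\!\})=a$; send all other finite multisets injectively into $\mathbb{R}^d\setminus\{a,c\}$.
\begin{itemize}
\item Let $G_1$ have $A_1^1=\{(0,1)\}$, $A_2^1=\{(0,3)\}$ and root $A_1^2=\{(1,1),(1,2),(0,2),(0,4)\}$.
\item Let $G_2$ have the single root $A_1^1=\{(0,1),(0,2),(0,3),(0,4)\}$.
\end{itemize}
Both DAGs output $AGG(\{\!\{a,b,c,d\}\!\})$. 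Both are reduced in your sense: every node feeds the output, and no two nodes share a predecessor set. Their unfolding trees differ, so neither your equivalence nor your convention identifies them.

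\textbf{The fix.} Add a hypothesis keeping expert outputs out of the range of $AGG$, for instance a flag coordinate equal to $0$ on expert outputs and $1$ on every $AGG$ output. With it, your strong induction goes through. Your hash-consing step is then fine, since the same induction shows that $T$ is injective on the nodes of a reduced DAG, and predecessor sets can be read off the subterms.

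The paper's single-source rewriting plays a similar role, because it makes every non-source state an $AGG$ output. In that form, however, the $K$ expert nodes all become $AGG(\{\!\{U(\boldsymbol{0},x_0^0)\}\!\})$. Keeping them apart then requires distinct per-node features, which is your distinct-leaves assumption in another form.
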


\begin{theorem}
\label{theo:dag_expressive}
Given a top-$K$ expert list $\mathbf{k}$, there exists a DAG-style MoE satisfying \eqnref{eq:general_dag_moe_init}--\eqnref{eq:general_dag_moe_output} that is strictly more powerful than the standard MoE in \eqnref{eq:moe}, provided that $AGG$ is injective.
\end{theorem}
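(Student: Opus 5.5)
The argument has two parts. First, DAG-style MoE can reproduce everything standard MoE computes. Second, there exist inputs that standard MoE cannot tell apart but some DAG-style MoE can. Throughout, ``more powerful'' means that the partition of (expert-output configurations) induced by the layer output is strictly finer.

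\textbf{Containment.} Take the depth-one DAG $G_0$ with $L=1$ and $A^1_1=\{(0,1),\ldots,(0,K)\}$. Equations \eqnref{eq:general_dag_moe_init}--\eqnref{eq:general_dag_moe_output} then give $y = AGG(\{g_{\mathbf{k}[i]}(x)E_{\mathbf{k}[i]}(x)\}_{i=1}^K)$. Sum is an admissible aggregator, so choosing $AGG$ to be the sum recovers \eqnref{eq:moe} exactly. This shows the class of DAG-style MoE contains standard MoE.

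Alternatively, if we insist on an injective $AGG$, we argue at the level of distinguishing power. An injective multiset function on $\{x_i^0\}$ separates any two inputs whose weighted sums differ, because the sum is itself a function of the multiset. Hence, whenever standard MoE assigns different outputs, so does $G_0$. I would present both readings, using the second as the formal one.

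\textbf{Strictness.} Standard MoE depends only on the multiset $\{x_i^0\}$ of selected weighted expert outputs, since summation is permutation-invariant. I would exhibit two configurations with the same multiset but different structural assignments. Use the tree in the middle of \figref{fig:moe_structure} with $K=4$, and compare the ordered lists $(a,b,c,d)$ and $(c,b,a,d)$, i.e.\ swap experts $1$ and $3$, with $a\neq c$ and $\{a,b\}\neq\{c,b\}$ as multisets.
\begin{itemize}
\item Standard MoE outputs $a+b+c+d$ for both lists.
\item The tree computes $AGG(\{AGG(\{a,b\}),AGG(\{c,d\})\})$ versus $AGG(\{AGG(\{c,b\}),AGG(\{a,d\})\})$.
\end{itemize}
By injectivity of $AGG$ at level one, $AGG(\{a,b\})$ differs from both $AGG(\{c,b\})$ and $AGG(\{a,d\})$ whenever $\{a,b\}$ differs from both $\{c,b\}$ and $\{a,d\}$. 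This holds for generic distinct values, e.g.\ $a,b,c,d$ pairwise distinct. The inner multisets therefore differ, and injectivity at level two gives distinct outputs.

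The general version of this step is \proref{pro:dag_injective}. Applied to a fixed expert list, it says DAG-style MoE injectively encodes the structure, so different (DAG, assignment) pairs yield different outputs, whereas standard MoE collapses them all. This settles strictness.

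\textbf{Main obstacle.} The hard part is making ``strictly more powerful'' precise. The order of the selected experts must count as part of the input that the router or structure module supplies, with the DAG or role assignment varying while the multiset of expert outputs stays fixed. I would state explicitly that the comparison is over tuples (expert list with structural roles, outputs). I would also guarantee that the concrete values $a,b,c,d$ are realizable, for instance by choosing experts whose weighted outputs are pairwise distinct for some $x$, which is a mild genericity condition on the FFNs.
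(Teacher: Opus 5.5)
Your proposal is correct and follows essentially the paper's route. Containment comes from a flat readout with $AGG$ taken as summation, and strictness comes from the depth-2 tree on four experts: re-pairing the leaves changes the depth-1 multisets, so an injective $AGG$ separates two configurations that the permutation-invariant sum identifies. Your version is in fact slightly tighter than the paper's in two places. First, you reconcile the summation choice with the injectivity hypothesis through the refinement argument. Second, you correctly require both $a\neq c$ and $b\neq d$ so that the outer multisets differ. The paper assumes only $x^0_{\cdot,2}\neq x^0_{\cdot,3}$, which does not separate $G_1$ and $G_2$ when $x^0_{\cdot,1}=x^0_{\cdot,4}$.
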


We defer the detailed proofs to \appref{app:proofs_expressiveness}. At a high level, we connect DAG-style MoE to message-passing graph neural networks~\citep{gin, mpnn} and leverage results from D-VAE~\citep{dvae} to show that the above formulation can injectively encode any DAG structure and node permutation, whereas standard MoE cannot. This is sufficient to establish that DAG-style MoE is strictly more expressive than standard MoE.

\textbf{Benefits for reasoning.} Beyond expressiveness, we now discuss a more practical benefit of DAG-style MoE, using dynamic programming (DP) as a running example. DP is a foundational paradigm for decision-making and combinatorial optimization (we defer its formal definition to \appref{app:dp_definition}). Solving a DP problem amounts to iteratively solving a partially ordered collection of subproblems and aggregating their solutions to produce the final answer. \citet{feng2023towards} show that Transformers without a Chain-of-Thought (CoT) mechanism~\citep{cot} cannot effectively solve DP tasks, since a constant-depth model cannot simulate the requisite multi-step subproblem computations. Importantly, many DP procedures induce a DAG over subproblems via their natural partial order, which makes them a direct fit for DAG-style MoE: with sufficient flexibility, a well-trained DAG-style MoE can align its expert DAG with the DP solution graph, where each $AGG$ approximates one subproblem transition and the final readout produces the DP answer. The following theorem formalizes this intuition. Let $G(dp)$ denote the computation DAG of a DP problem and $L(dp)$ its maximum depth.

\begin{theorem}
\label{theo:dag_moe_dp}
    For any integer $n\in \mathbb{N}$, and any DP problem satisfying Assumptions 4.2--4.5 of \citet{feng2023towards} whose total input length $|\boldsymbol{n}|$ is at most $O(K\log n)$, there exists a log-precision Transformer consisting of (i) one multi-head attention layer with $H=O(K\log n)$ heads and (ii) one generic DAG-style MoE block (Equations~\ref{eq:general_dag_moe_init}--\ref{eq:general_dag_moe_output}) with top-$K$ experts, $L \geq L(dp)$ iterations sharing the same parameters, and adjacency $\mathcal{A}$ specified by $G(dp)$, with hidden dimension $d=O(K\log n)$ and per-iteration parameter count $O(\mathrm{poly}(K))$, such that all parameter values are bounded by $O(\mathrm{poly}(n))$ and the output token is the correct DP answer.
\end{theorem}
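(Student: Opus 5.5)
We follow the constructive strategy of \citet{feng2023towards}, but replace their $O(L(dp))$ CoT steps (or Transformer layers) with the $L\geq L(dp)$ shared-parameter iterations of a single DAG-style MoE block. The proof has three parts, in order.

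\textbf{Step 1: read the whole input with one attention layer.} Since $|\boldsymbol{n}|=O(K\log n)$, we use $H=O(K\log n)$ heads, one per input position. Each head uses the positional-encoding trick of \citet{feng2023towards}: a query/key inner product sharply peaked at a fixed relative offset, so that under log-precision the softmax returns that position's embedding up to $1/\mathrm{poly}(n)$ error. The head outputs are concatenated into hidden dimension $d=O(K\log n)$. After this layer the last token's residual stream holds the full problem instance, with each input element in its own block of $O(\log n)$ coordinates. All weights are $O(\mathrm{poly}(n))$.

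\textbf{Step 2: align the expert DAG with $G(dp)$.} We set $\mathcal{A}$ equal to the subproblem DAG $G(dp)$, padded with identity pass-through nodes so that every depth up to $L$ is populated. By \proref{pro:dag_injective}, distinct positions in this DAG remain distinguishable, so node $(l,i)$ can carry the state of subproblem $i$ at depth $l$. We initialise each subproblem state from the experts (\eqnref{eq:general_dag_moe_init}):
\begin{itemize}
\item expert $\mathbf{k}[i]$ is an FFN that extracts the input entries relevant to base subproblem $i$ and writes them into a designated slot;
\item because $g$ is a fixed positive scalar for the selected experts, it can be cancelled by rescaling inside the subsequent $AGG$, or by using a gate whose logits are made large.
\end{itemize}

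\textbf{Step 3: realise the transition function with $AGG$.} We take $AGG$ to be sum pooling followed by an MLP with $O(\mathrm{poly}(K))$ parameters, shared across iterations. This requires two ingredients.
\begin{enumerate}
\item Each incoming message keeps its value in a slot indexed by a one-hot, role-specific tag stored in the node state. Sum pooling then yields a concatenation rather than a mixture, so the MLP sees an ordered tuple of predecessor solutions.
\item By Assumptions 4.2--4.5, the DP transition $f$ and the aggregation function are approximable by constant-size MLPs with $\mathrm{poly}(n)$-bounded weights to $1/\mathrm{poly}(n)$ accuracy, and the aggregation is a min/max/sum over a bounded number of predecessors. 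We therefore reuse their MLP approximation lemma verbatim.
\end{enumerate}
The final node $(L,1)$ applies the answer-readout function $u$. By induction on depth, node $(l,i)$ holds the exact (rounded) value $dp(i)$. Log-precision rounding keeps errors from accumulating, because each MLP output is snapped back to the integer grid.

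\textbf{Main obstacle.} The hard part is the combination of \emph{parameter sharing across iterations} with \emph{node-specific transitions}. The same $AGG$ must compute the correct transition for each node $i$ at every depth, even though the predecessors and their roles differ from node to node. I would handle this by storing in every node state:
\begin{itemize}
\item a subproblem index $i$, encoded in $O(\log n)$ bits;
\item the index's predecessor-role tags.
\end{itemize}
Since $f$ in Feng et al.'s assumptions takes the subproblem index as an input, a single MLP acting on (index, ordered predecessor values) implements all transitions uniformly. The delicate verification is that this encoding fits within $d=O(K\log n)$ and that the shared MLP stays within $O(\mathrm{poly}(K))$ parameters. This works because only the $O(K)$ nodes' slots are ever live, and the index arithmetic is handled by the bounded-size MLPs of their Lemma on arithmetic with $O(\log n)$-bit numbers.
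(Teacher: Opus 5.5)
Your route is the paper's: one positional-copy head per input position gathers the instance into a single token, $\mathcal{A}$ is hard-coded to $G(dp)$, the experts act as selectors of input tokens, and one shared $AGG$ (sum/min/max followed by an MLP) realises the transition $f$ via Assumption~4.3 of \citet{feng2023towards}, with readout $u\circ\square$. However, the step you single out as the main obstacle is not actually resolved.

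In \eqnref{eq:general_dag_moe_agg} the state of node $(l,i)$ is $AGG$ applied to the multiset of its predecessors' states and nothing else. There is one shared $AGG$ and, unlike D-VAE, no per-node input feature. So ``store the subproblem index $i$ in every node state'' requires $i$ to be a function of the predecessor states. No $AGG$, injective or not, can separate two nodes whose predecessor multisets coincide.

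That situation is forced once $|\boldsymbol{n}|>K$. The block has only $K$ depth-$0$ nodes, so by pigeonhole two base states $dp(i)=f(\boldsymbol{n},i,s_i)$ and $dp(i')=f(\boldsymbol{n},i',s_{i'})$ whose tokens sit in the same expert's output receive identical states, even when their values differ. Giving each base subproblem its own expert breaks as soon as there are more than $K$ of them, and you never say how interior nodes obtain $s_{\boldsymbol{g}(\boldsymbol{n},i)}$.

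Citing \proref{pro:dag_injective} does not help: it separates different graphs, not different nodes of one graph. Your role tags fail for the same reason. A node that is the $r$-th argument of one successor and the $r'$-th of another cannot carry one tag that turns sum pooling into an ordered concatenation for all of them. In LCS, for instance, $dp(a,b)$ is the upper, left and diagonal predecessor of three different states.

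The repair has to change the graph, not the state. Augment $G(dp)$ with auxiliary nodes grown from the $K$ expert outputs; the statement's $L\ge L(dp)$ leaves room for this. For example:
\begin{enumerate}
\item shift-register chains below each expert node, so that each chain node exposes one token at fixed coordinates;
\item an address tree that gives each subproblem node a predecessor carrying its index (it must branch by pairing a parent with one of two distinct constant nodes, since siblings with equal predecessor multisets coincide);
\item per-edge nodes $AGG(\{x_j,c_r\})$ that pair a predecessor with a role constant $c_r$ and write its value into slot $r$.
\end{enumerate}
Then each subproblem node's predecessor multiset determines $(i,s_{\boldsymbol{g}(\boldsymbol{n},i)},dp(\boldsymbol{h}(\boldsymbol{n},i)))$ in disjoint fixed blocks, and one transition MLP reading those blocks implements every transition. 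A size bound on $AGG$ must rest on that fact. It cannot rest on ``only $O(K)$ slots are live'': the generic block has its own $d$-dimensional state for every DAG node, and $G(dp)$ can have far more than $K$ nodes per depth.

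One smaller point: $O(K\log n)$ tokens in $O(\log n)$-coordinate blocks occupy $O(K\log^2 n)$ coordinates. Under log precision, store one token per coordinate to get $d=O(K\log n)$.

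The paper's own proof passes over the same issues. It asserts that $AGG_{\mathrm{trans}}$ computes $f(\boldsymbol{n},i,\cdot)$ directly from the predecessor multiset, after mapping the $|\boldsymbol{n}|$ input nodes onto $K$ chunked expert nodes. So you located the real difficulty correctly; it just has to be resolved by the construction rather than assumed.
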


We defer the detailed proof to \appref{app:proof_dag_moe_dp}. Briefly, by aligning the DAG-style MoE computation with $G(dp)$, the model can explicitly simulate every intermediate subproblem step for DP instances of input length $O(K\log n)$ (with $K$ active experts), whereas standard MoE can realize only a single aggregation step due to its permutation-invariant summation. Three caveats are worth noting. First, the construction uses a non-constant number of attention heads $H=O(K\log n)$ to gather all input tokens into a single position; this is the cost of collapsing the entire DP execution into a single forward pass. Second, since iterations share parameters, the per-iteration parameter count is independent of $n$, but the overall computation depth $L$ grows with $L(dp)$, which can scale with $|\boldsymbol{n}|$ in the worst case. Third, solving a full $O(n)$-sized DP instance within a constant-depth forward pass remains out of reach. Nevertheless, a single DAG-style MoE layer can execute multiple reasoning steps, effectively increasing the logical depth of a Transformer layer without significant parameter or compute overhead. Due to space limits, we defer a detailed worked example to \appref{app:lis_example}.

\textbf{Remarks on \theoref{theo:dag_moe_dp}.} \theoref{theo:dag_moe_dp} is a \emph{capacity} result: it asserts the existence of a parameter setting under which a single DAG-style MoE layer simulates DP solving, much as universal approximation theorems guarantee representational capacity without specifying how gradient descent would find such parameters. The DP construction should therefore be read as a theoretical motivation rather than a prescription: it shows that structural aggregation naturally fits multi-step compositional computation. Correspondingly, we do not claim that the DAG learned by \name on natural-language data exactly matches any specific $G(dp)$; we only argue that, by capturing the spirit of hierarchical, ordered aggregation, DAG-style MoE is better positioned to support compositional reasoning than its permutation-invariant counterpart.

\subsection{\name: learning optimal DAG between experts}

\begin{figure*}[t]
\centering
\includegraphics[width=\linewidth, trim=0.5cm 4cm 0.5cm 3cm, clip]{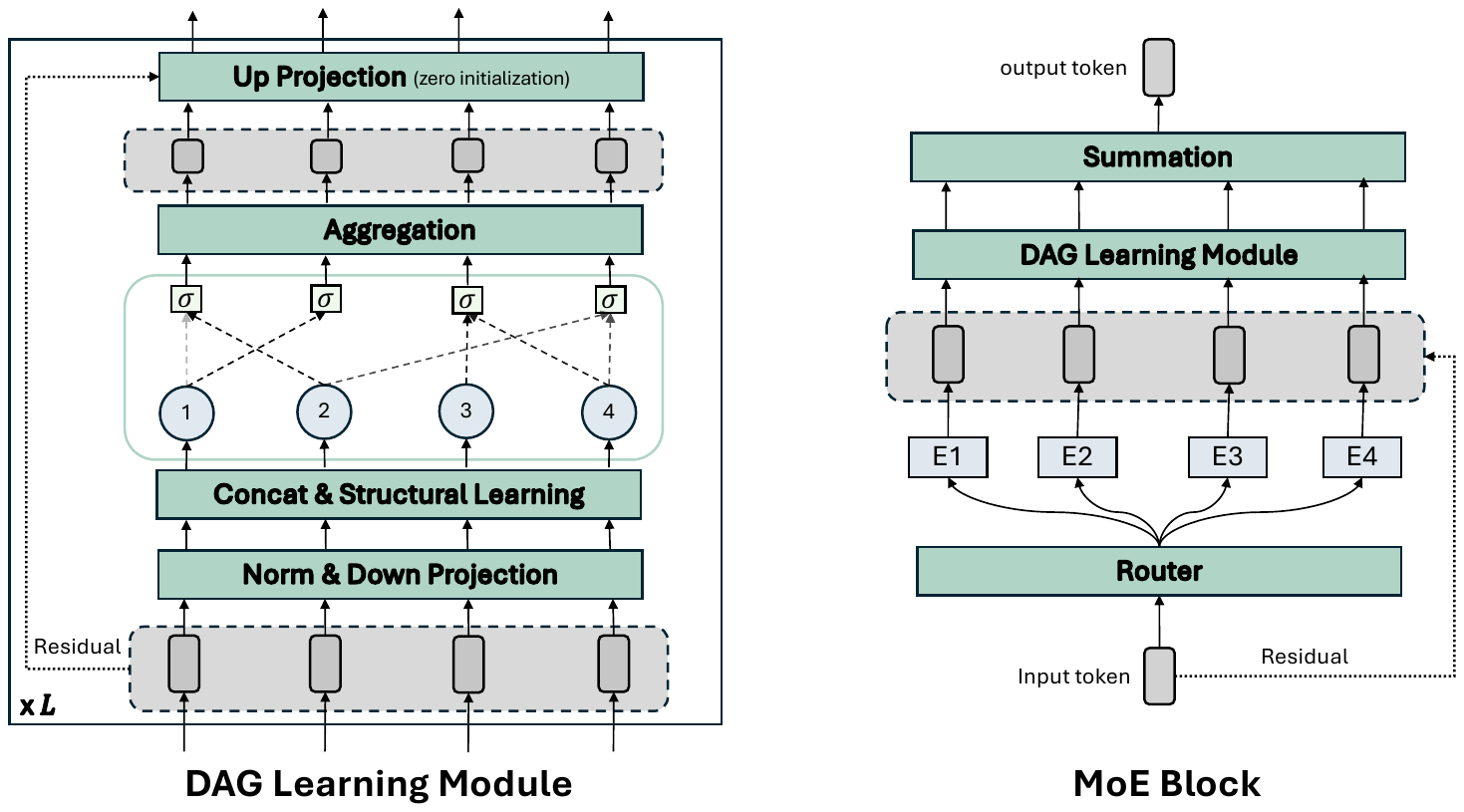}
\vspace{-18pt}

\caption{Left: the DAG learning module automatically learns an optimal DAG structure over the selected experts and executes the DAG-style computation. Right: the complete MoE block in \name.}
\vspace{-15pt}
\label{fig:dag_moe}
\end{figure*}

\label{sec:dag_moe_imp}
While organizing experts into a DAG offers clear theoretical advantages, realizing a DAG-style MoE architecture is non-trivial. Since $|\mathcal{G}(K)|$ grows exponentially in $K$, the design space of possible structures quickly becomes intractable. A straightforward workaround is to predefine the structure and tailor the model accordingly. For example, S$^\prime$MoRE~\citep{smore} fixes the structure to a tree and employs a hierarchical router that selects experts depth by depth in a top-down manner. However, this mechanism does not generalize to other structures, and different tokens may benefit from different structures. We instead introduce \name, a general and practical architecture that learns the DAG structure and performs aggregation accordingly.

\textbf{Architecture design.} Given an input token $x$, the router first selects the top-$K$ experts and obtains their output representations, which serve as the initial node set for DAG learning. A DAG learning module is then applied to infer the structure among these expert representations and aggregate them based on the learned connections. This module operates for $L$ iterations; each iteration applies the same module to learn the structure at the current depth, producing a DAG of maximum depth $L$. At each iteration, the node representations from the previous iteration are updated, and the connectivity for the current depth is determined. Formally, let $x_i^l$ be the representation of the $i$-th node at depth $l$ for token $x$, with $l=0,\ldots, L$. The DAG learning module produces the final token representation through the following procedure:

\begin{flalign}
\label{eq:dag_moe_initial}
x_i^{0} =  g_{\mathbf{k}[i]}(x)\,E_{\mathbf{k}[i]}(x) + \tfrac{1}{K}x,\quad i=1,\ldots,K,
\end{flalign}
\begin{flalign}
\label{eq:dag_moe_norm}
x_{i, \text{input}}^l = \text{LayerNorm}\left(x_{i}^{l-1}\right),
\end{flalign}
\begin{flalign}
\label{eq:dag_moe_down}
x_{i, \text{down}}^l = W_{\text{down}}^lx_{i, \text{input}}^l, 
\end{flalign}
\begin{flalign}
\label{eq:dag_moe_concat}
x^{l}_{(i, j)} = \text{Concat}(x_{i, \text{down}}^l, x_{j, \text{down}}^l), 
\end{flalign}
\begin{flalign}
\label{eq:dag_moe_structural}
e^{l}_{(i, j)} = \sigma\left(W_{\text{edge}}^lx^{l}_{(i, j)}\right), \quad \hat{x}^{l}_{(i, j)} = e^{l}_{(i, j)} *W_{\text{node}}^lx^{l}_{(i, j)}, 
\end{flalign}
\begin{flalign}
\label{eq:dag_moe_up}
x_{i}^l = W_{\text{up}}^l\left(\sum_{j=1,\ldots,K}\hat{x}_{(i, j)}^l\right) + x_{i}^{l-1}, 
\end{flalign}

where $W_{\text{down}} \in \mathbb{R}^{d_g \times d}$, $W_{\text{edge}} \in \mathbb{R}^{d_g \times 2d_g}$,  $W_{\text{node}} \in \mathbb{R}^{d_g \times 2d_g}$, and $W_{\text{up}} \in \mathbb{R}^{d \times d_g}$ are learnable parameters, and $d_g$ is the hidden size of the DAG learning module, which can be set independently of both the model hidden size $d$ and the expert hidden size $d_r$. $\sigma$ is a nonlinear activation, implemented by the same activation function as in the expert FFN. The final output of the MoE block is obtained by summing the depth-$L$ node representations, which corresponds to taking $AGG$ as summation in the readout of \eqnref{eq:general_dag_moe_output}:
\begin{equation}
    \label{eq:dag_moe_final}
    y = \sum_{i=1}^{K} x_{i}^L.
\end{equation}

We now detail the key design choices in the DAG learning module. First, \eqnref{eq:dag_moe_initial} computes the output for each selected expert $i$, which serves as the initial node representation at depth $0$. We additionally inject the original token representation through a residual connection; to keep the total residual contribution normalized to $1$ after summing over all experts in \eqnref{eq:dag_moe_final}, we scale each per-node residual by $1/K$. Empirically, both the residual connection and the scaling factor are crucial for training stability. To keep the DAG learning module lightweight, at each iteration we first normalize and then project the node representations into a lower-dimensional space via \eqnref{eq:dag_moe_norm} and \eqnref{eq:dag_moe_down}, and learn the structural relationships within this reduced space.

At iteration $l$, the structure is learned and executed as follows. First, we choose the number of nodes $n(l)$. In principle $n(l)$ can be any positive integer, leading to an enormous search space; in \name we simply fix $n(l)=K$, matching the number of experts. Next, for each node $(l, i)$, $i=1, \ldots,K$, we determine how it aggregates information from previous depths. Learning connections to all earlier nodes is expensive and yields overly dense graphs. To mitigate this, we restrict each node $(l, i)$ to aggregate only from nodes at depth $l-1$, and inject information from earlier depths $0, \ldots, l-2$ via residual connections. We use $x^{l}_{i, \text{down}}$ as the query for node $(l, i)$ and $\{x^{l}_{j, \text{down}}\}_{j=1}^{K}$ as the candidate keys, and learn the connection between $(l, i)$ and each $(l-1, j)$ through \eqnref{eq:dag_moe_concat}--\eqnref{eq:dag_moe_structural}.

Concretely, we first form a candidate edge feature by concatenating $x^l_{i, \text{down}}$ with each $x^l_{j, \text{down}}$ for $j=1,\ldots, K$. The edge gate $e^l_{(i,j)}$ is then learned via the projection $W_{\text{edge}}$ followed by activation $\sigma$. This gate can encode various forms of relational information---for instance, logical operations in reasoning tasks, relations in knowledge graphs, or simply ``no connection.'' The connection representation between $(l, i)$ and $(l-1, j)$ is computed by \eqnref{eq:dag_moe_structural} as an element-wise gating with $e^l_{(i, j)}$. Finally, the gated information is aggregated and projected back to the original dimension via \eqnref{eq:dag_moe_up}, producing the output representation of node $(l, i)$. We use zero-weight initialization for the up-projection to stabilize early training. The complete workflow of the DAG learning module, together with the surrounding MoE block in \name, is illustrated in \figref{fig:dag_moe}.

\textbf{Computational cost analysis.} Since \name introduces only the DAG learning module as an additional component, we focus our cost analysis on this module. Let the batch size be $B$ and the sequence length be $S$. The FLOPs of a single matrix multiplication are $2BSd_{\text{in}}d_{\text{out}}$. Based on this, the FLOPs of the DAG learning module for a single iteration are $\text{FLOPs}_{\text{dag}} = 4BSdd_g + 4 K^2BS\cdot 2d_gd_g = 4BSdd_g + 8K^2BSd_g^2$. In comparison, the FLOPs for an additional shared expert in MoE with hidden dimension $d_g$ are $\text{FLOPs}_{\text{expert}} = 4BSdd_g + 2BSdd_g$. Subtracting and dividing out the common terms, the comparison reduces to $4K^2d_g$ versus $d$. In practice, since both $K\ll d$ and $d_g\ll d$, $4K^2d_g$ can be similar to or smaller than $d$. However, there will be additional overhead for the DAG learning module with multiple iterations due to its sequential nature.

%% file: tex/experiments.tex
\begin{figure*}[htbp]
    \centering
    \begin{subfigure}{0.32\textwidth}
        \includegraphics[width=\linewidth, trim=0cm 0.5cm 2cm 2.5cm, clip]{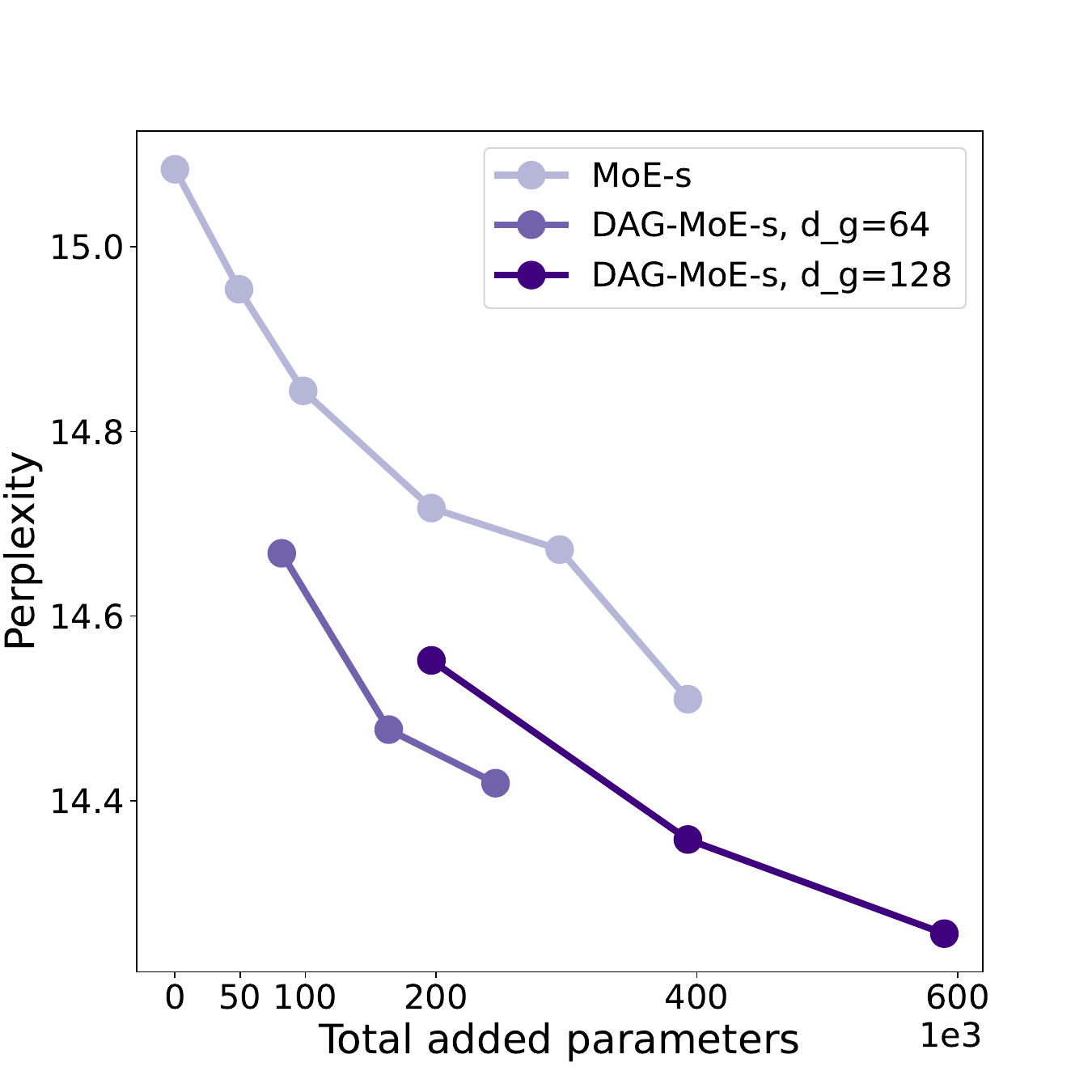}
        \caption{Small version, top-$K$=4}
        \label{fig:our_vs_baseline_1}
    \end{subfigure}
    \begin{subfigure}[b]{0.32\textwidth}
        \includegraphics[width=\linewidth, trim=0cm 0.5cm 2cm 2.5cm, clip]{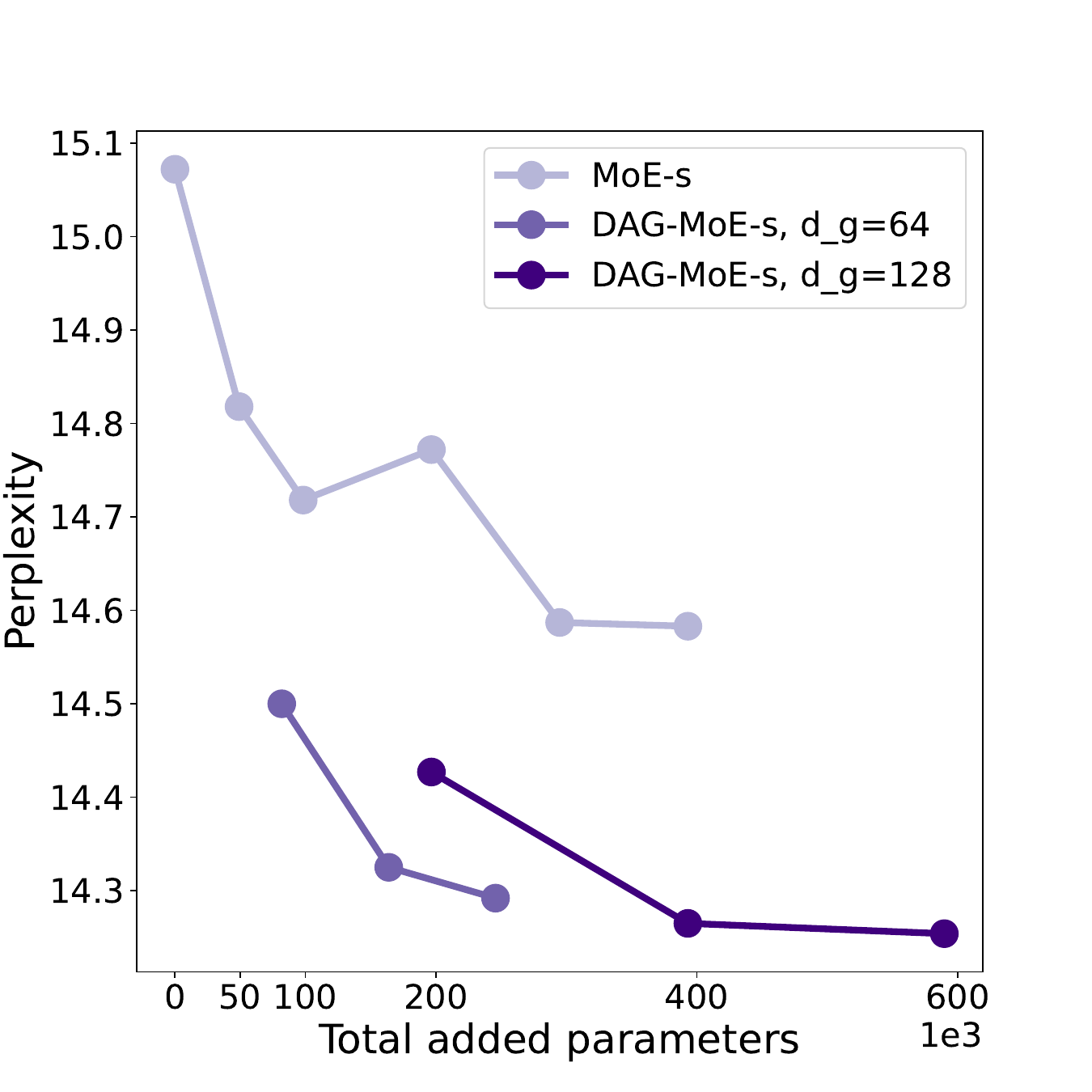}
        \caption{Small version, top-$K$=8}
        \label{fig:our_vs_baseline_2}
    \end{subfigure}
    \begin{subfigure}[b]{0.32\textwidth}
        \includegraphics[width=\linewidth, trim=0cm 0.5cm 2cm 2.5cm, clip]{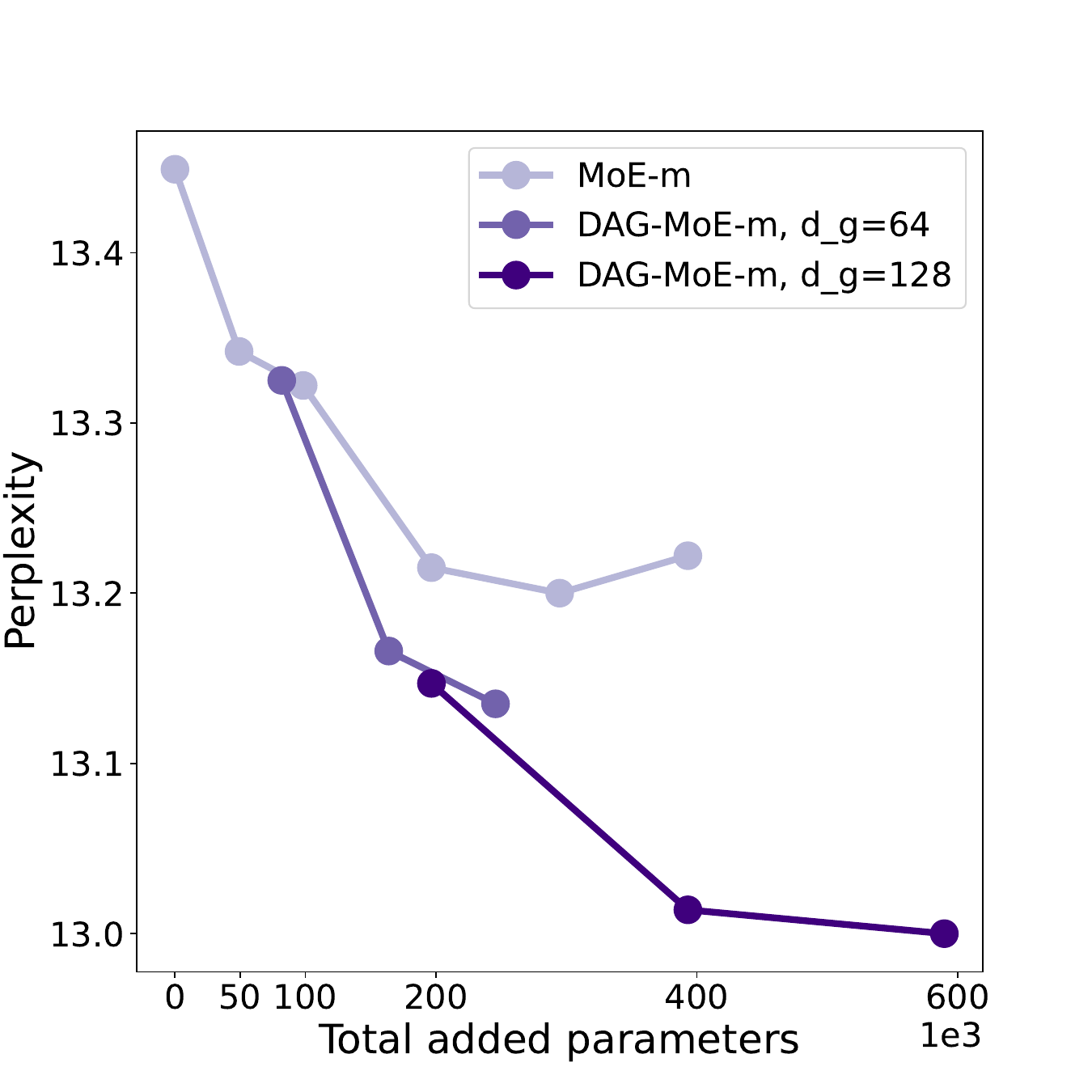}
        \caption{Middle version, top-$K$=4}
        \label{fig:our_vs_baseline_3}
    \end{subfigure}
    \caption{Perplexity of standard MoE and \name on the Pile evaluation subset. The x-axis denotes the parameters added beyond the standard MoE block: for the baseline, the size of the added shared expert; for \name, the product of the number of iterations $L$ and the per-iteration parameter count of the DAG learning module.}
    \label{fig:our_vs_baseline}
    \vspace{-8pt}
\end{figure*}

\section{Experiments}
\label{sec:exp}
We empirically evaluate \name with the goal of answering three questions:
\textbf{Q1}: Does \name outperform standard MoE across different base MoE configurations?
\textbf{Q2}: How do the hyperparameters of the DAG learning module affect \name's performance?
\textbf{Q3}: After fine-tuning, how does \name perform on downstream language tasks?
To address these questions, we integrate \name as the primary MoE block in an LLM and train the model from scratch on a causal language modeling objective. Our evaluation proceeds in two stages: a 12B-token study on smaller models to ablate the architectural and hyperparameter choices (Q1, Q2), followed by a 40B-token large-scale run with downstream fine-tuning that tests whether the architectural advantage transfers to end-task capabilities (Q3).

\subsection{Experimental settings}

We briefly summarize the pretraining and fine-tuning setup here, and defer the full details to \appref{app:implementation}.

\textbf{Model details.} \name follows the architecture of Llama3.1-8B~\citep{llama3}, retaining its tokenizer, attention module, and FFN design, while reducing the number of layers and the hidden size due to resource constraints. The MoE block uses a standard token-choice router following Switch Transformer~\citep{switchtransformer} with a load-balance loss; we additionally apply a router Z-loss to regularize the router logits~\citep{olmoe,stmoe}. On top of the MoE block, we add the DAG learning module described in \secref{sec:dag_moe_imp} and vary its hidden size $d_g$ and depth $L$. To systematically evaluate \name across model scales, we design three variants: \textbf{\name-s} (4 layers, hidden size 512), \textbf{\name-m} (6 layers, hidden size 512), and \textbf{\name-l} (8 layers, hidden size 1024). For the MoE block, \name-s is studied under two granularity settings: (i) \textbf{coarse-grained}, with $32$ experts of size $d_r=256$ and top-$4$ routing; and (ii) \textbf{fine-grained}, with $64$ experts of size $d_r=128$ and top-$8$ routing. \name-m and \name-l use the coarse-grained setting only, with $d_r=256$ and $d_r=512$ respectively. As baselines, we use a standard MoE that shares the identical MoE configuration with \name---same number of experts, same expert hidden size $d_r$, same top-$K$ router, and same training recipe---but without the DAG learning module. Since the DAG learning module introduces additional parameters, we further add a shared expert to the baseline whose hidden size is chosen so that the total parameter count exactly matches \name, ensuring that any gain reflects structural aggregation rather than extra capacity.

\textbf{Data, training, and evaluation.} We use the Pile~\citep{pile} as the pretraining corpus and design two setups. In the first, we train \name-s and \name-m on approximately 12B tokens randomly sampled from the Pile, and evaluate on a held-out 1.3B-token subset. In the second, we train \name-l on about 40B tokens and evaluate on both in-domain (Pile) and out-of-domain corpora: FineWeb-Edu~\citep{fineweb_edu}, Wikipedia~\citep{thrush2022pipeline}, and C4~\citep{c4}. All models are trained with the causal language modeling objective at a maximum sequence length of 2048, with perplexity as the primary metric.

\textbf{Fine-tuning.} To evaluate downstream performance, we fine-tune both \name-l and the corresponding baseline (each pretrained on 40B tokens) on a mixture of Alpaca~\citep{alpaca}, Open-Platypus~\citep{platypus2023}, SlimOrca~\citep{mukherjee2023orca}, MathInstruct~\citep{MAmmoTH}, Open-R1-Math\footnote{\url{https://huggingface.co/datasets/open-r1/OpenR1-Math-220k}}, and MetaMathQA~\citep{yu2023metamath}, for 3 epochs with a constant learning rate. We then evaluate on PIQA~\citep{piqa}, ARC-e~\citep{arc}, HellaSwag~\citep{hellaswag}, GPQA~\citep{gpqa}, Lambada~\citep{lambada}, MMLU~\citep{mmlu}, and BBH~\citep{bbh}. Detailed configurations and dataset descriptions are provided in \appref{app:ft_impl}.

\subsection{Pretraining evaluation results}
\label{sec:pretrain_results}

We pretrain \name-s, \name-m, and their corresponding baselines on 12B tokens, varying the DAG learning module hyperparameters $d_g\in\{64,128\}$ and $L\in\{1,2,3\}$. The main results are shown in \figref{fig:our_vs_baseline}. For the baseline MoE, we include variants both with and without a shared expert; the shared-expert size is encoded on the x-axis as ``added parameters'' (with $0$ denoting no shared expert). For \name, the x-axis likewise indicates the parameters added by the DAG learning module: each curve corresponds to a fixed $d_g$, and each marker corresponds to a different $L\in\{1,2,3\}$, with the parameter count scaling linearly in $L$.

From \figref{fig:our_vs_baseline}, two observations stand out. First, \textbf{\name consistently achieves lower perplexity than the standard MoE} across nearly all model sizes, granularities, and settings of $L$ and $d_g$. This empirically validates the expressiveness gain of structural aggregation: for a comparable parameter budget, the learned DAG affords greater flexibility in composing information from the selected experts. The improvement holds under both coarse-grained (top-$K{=}4$) and fine-grained (top-$K{=}8$) routing, suggesting that DAG-style aggregation is robust to MoE granularity. Second, even compared with a parameter-matched baseline that adds a shared expert, \name attains a clear margin, indicating that the gain comes from the structural aggregation itself rather than from additional parameters alone. These observations directly answer \textbf{Q1}. We next dissect how the hyperparameters $L$ and $d_g$ contribute to this gain.

\begin{figure}[htbp]
    \centering
\hspace{-10pt}\includegraphics[width=\linewidth, trim=0.3cm 0.2cm 0.3cm 0.3cm, clip]{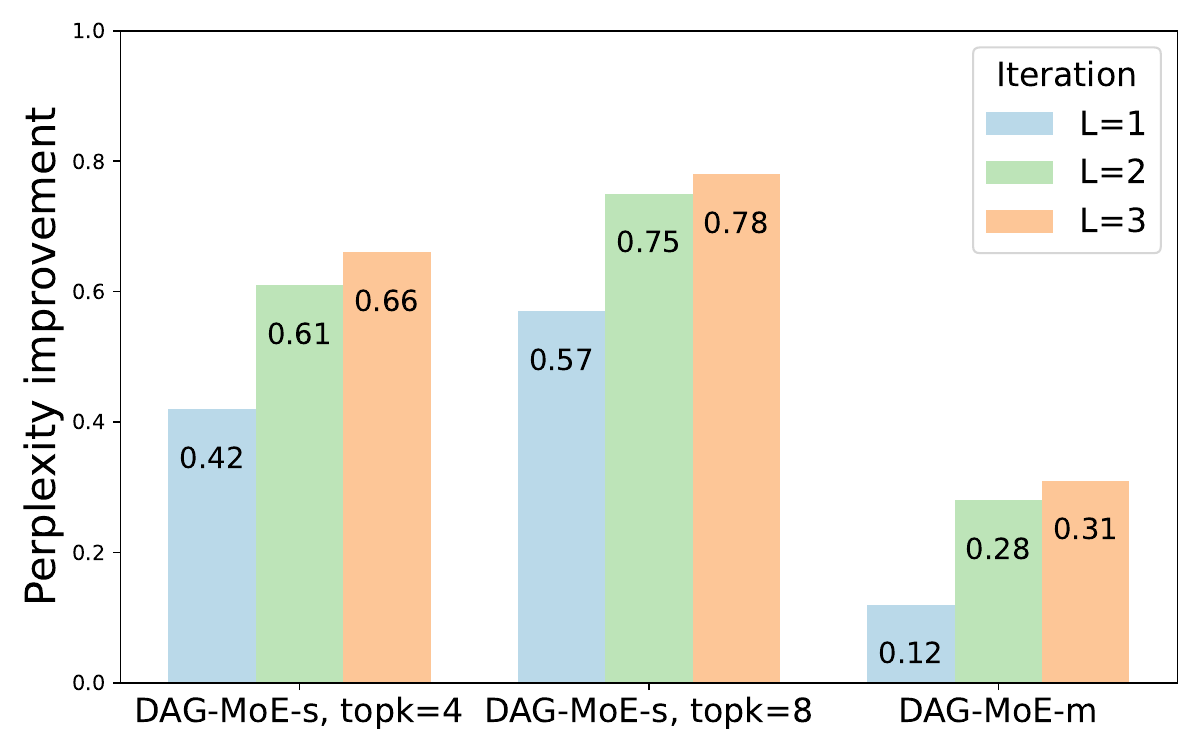}
    \caption{Perplexity reduction of \name (with $d_g=64$) over the no-shared-expert MoE baseline as a function of the number of DAG iterations $L$. Higher is better.}
    \vspace{-10pt}
    \label{fig:scaling}
\end{figure}

To answer \textbf{Q2}, \figref{fig:scaling} reports the perplexity improvement over the no-shared-expert baseline as a function of $L$. We make two observations. First, for both \name-s and \name-m and a fixed $d_g$, performance improves as $L$ increases, with the largest gains coming from $L=0\to1$ and $L=1\to2$; for example, on \name-s with both top-$K{=}4$ and top-$K{=}8$, a single iteration with $d_g{=}64$ already yields about a $0.5$ reduction in perplexity. The improvement from $L=2$ to $L=3$ is marginal, suggesting that one or two iterations already suffice to capture the relevant compositional structure for most tokens. Second, increasing $d_g$ also helps but is less effective per parameter than increasing $L$: in \figref{fig:our_vs_baseline_1}, \name with $d_g{=}64,L{=}2$ outperforms $d_g{=}128,L{=}1$ despite adding fewer parameters, and the same trend holds for the fine-grained (\figref{fig:our_vs_baseline_2}) and medium-scale (\figref{fig:our_vs_baseline_3}) settings.

\input{tables/dag_moe_large_pretrain}

To further validate the gain at scale, we pretrain \name-l on 40B tokens. Here \name-l uses a DAG learning module with $d_g=256$ and $L=2$, and the baseline MoE-l adds a shared expert with $d_r=512$, so that both models have $699$M parameters. We evaluate perplexity on both the in-domain corpus (Pile) and three out-of-domain corpora (Wikipedia, FineWeb-Edu, C4). As shown in \tabref{tab:dag_moe_large_pretrain}, \name-l consistently outperforms MoE-l under the same training and parameter budget, with the gap noticeably larger on out-of-domain corpora ($-0.69$ on FineWeb-Edu and $-1.00$ on C4, vs.\ $-0.24$ on Pile). A natural reading is that out-of-domain text places higher demand on the model's ability to compose information flexibly across experts, so the expressiveness advantage formalized in \theoref{theo:dag_expressive} surfaces more visibly there. We provide the training curves and additional discussion in \appref{app:additional_discussion_pretrain}.

\subsection{Comparison with alternative designs and efficiency}
\label{sec:exp_alternatives}

We further ask whether the gain is specific to the \emph{DAG-style} structural aggregation and what wall-clock cost it incurs. We address both with controlled ablations on \name-s under the coarse-grained setting (top-$K{=}4$, 12B tokens).

\input{tables/coe_comparison}

\textbf{Comparison with chain-of-experts.} Among existing architectures, Chain-of-Experts (CoE)~\citep{coe} is the closest to \name in spirit, as it also processes selected experts iteratively within a single MoE layer. We reimplement CoE under our codebase and match its added-parameter budget to \name (shared expert $d_g{=}256$; CoE $L{=}2$, $d_g{=}256$; \name-s $L{=}2$, $d_g{=}128$; all add 393K parameters per layer). As shown in \tabref{tab:coe_comparison}, \name attains a clearly larger perplexity reduction than CoE ($0.587$ vs.\ $0.480$), indicating that the gain comes from the \emph{structural form} of aggregation rather than iterative refinement alone.

\input{tables/mlp_mixing}

\textbf{Replacing the DAG with naive MLP mixing.} A second natural alternative is to drop the DAG entirely and concatenate the selected expert outputs into a single MLP with hidden size matched to the added-parameter budget. As shown in \tabref{tab:mlp_mixing}, this naive MLP mixing in fact \emph{degrades} performance below the standard MoE at both budgets, despite using strictly more parameters than \name. The contrast suggests that the DAG provides an inductive bias---structured, iterative composition of experts---that an unconstrained mixer cannot recover from data on a fixed MoE budget.

\input{tables/throughput}

\textbf{Throughput overhead.} \tabref{tab:throughput} reports the wall-clock training throughput of \name-s vs.\ the standard MoE under identical hardware and data-loading conditions. \name adds only $1.51\%$ overhead at $L{=}1$ and $4.49\%$ at $L{=}2$, with essentially identical total parameters and FLOPs---a modest cost for the perplexity gains shown above, and one that could be further reduced via standard kernel-level optimizations (e.g., \texttt{torch.compile}). To complement these quantitative results, we visualize the learned DAG structure---both the mean edge weights at each (layer, iteration) and the per-token spread under t-SNE---in \appref{app:viz_dag}, which shows that \name learns diverse, layer-specific, and token-adaptive aggregation patterns rather than collapsing to a single fixed graph.

\input{tables/dag_moe_large_downstream}

\subsection{Fine-tuning evaluation results}

The pretraining results above establish that \name reduces perplexity over standard MoE under matched parameter and compute budgets. We now examine \textbf{Q3}: whether this architectural advantage transfers to actual end-task capability after instruction fine-tuning. We fine-tune the pretrained \name-l and MoE-l on the instruction datasets above (under identical configurations) and evaluate on the seven downstream benchmarks. The results are shown in \tabref{tab:dag_moe_large_pretrain_downstream}. \name-l outperforms or matches the baseline on six of seven benchmarks, with an average accuracy of $26.13$ versus $24.06$. The gains concentrate on tasks that demand multi-step inference: \name-l improves over MoE-l by $+6.06$ on GPQA, $+3.46$ on Lambada, $+3.15$ on PIQA, and $+0.90$ on BBH, while remaining within noise on more pattern-matching benchmarks such as HellaSwag and MMLU. This pattern is consistent with the structural-aggregation motivation underlying \name: by composing expert outputs through a learned DAG rather than a permutation-invariant sum, \name is better positioned to support the hierarchical, ordered aggregation that compositional reasoning benefits from---without claiming, of course, that the learned DAG literally implements any specific algorithmic procedure (cf.\ the remarks following \theoref{theo:dag_moe_dp}). Combined with the perplexity gains in pretraining, these results suggest that the expressiveness benefit of structural aggregation is not merely a property of the language-modeling loss but is also realized in downstream behavior, particularly on tasks that reward compositional inference.

%% file: tables/dag_moe_large_pretrain.tex
\begin{table}[htbp]
    \caption{Pretraining perplexity ($\downarrow$) of \name-l vs.\ MoE-l on in-domain (Pile) and out-of-domain (Wiki, FineWeb-Edu, C4) corpora. Both models have $699$M parameters. Best per column in bold.}
    \vspace{-5pt}
    \label{tab:dag_moe_large_pretrain}
    \centering
    \begin{tabular}{c|cccc}
        \toprule
          Perplexity $\downarrow$ & Pile & Wiki & FineWeb & C4  \\
        \midrule
        MoE-l   & 10.51 & 21.08 & 25.38 & 35.21 \\
        \name-l & \textbf{10.27} & \textbf{20.54} & \textbf{24.69} & \textbf{34.21} \\
        \bottomrule
    \end{tabular}
    \vspace{-10pt}

\end{table}

%% file: tables/coe_comparison.tex
\begin{table}[htbp]
    \vspace{-5pt}
    \caption{Comparison with CoE on \name under matched added-parameter budget. $\Delta$ PPL is the perplexity reduction over the standard MoE.}
    \vspace{-5pt}
    \label{tab:coe_comparison}
    \centering
    \small
    \setlength{\tabcolsep}{4pt}
    \begin{tabular}{l|cc}
        \toprule
        Model & Add.\ Params & $\Delta$ PPL $\uparrow$ \\
        \midrule
        Standard MoE                  & 0    & 0.000 \\
        \quad + shared expert         & 393K & 0.433 \\
        CoE                           & 393K & 0.480 \\
        \name-s                       & 393K & \textbf{0.587} \\
        \bottomrule
    \end{tabular}
    \vspace{-5pt}
\end{table}

%% file: tables/mlp_mixing.tex
\begin{table}[htbp]
    \vspace{-5pt}
    \caption{Replacing the DAG learning module with naive MLP mixing on \name-s.}
    \vspace{-5pt}
    \label{tab:mlp_mixing}
    \centering
    \small
    \setlength{\tabcolsep}{4pt}
    \begin{tabular}{l|cc}
        \toprule
        Model & Add.\ Params & Eval Loss $\downarrow$ \\
        \midrule
        Standard MoE                                    & 0    & 2.7168 \\
        \quad + shared expert ($d_g{=}32$)              & 49K  & 2.7072 \\
        \quad + shared expert ($d_g{=}64$)              & 98K  & 2.7012 \\
        MLP mixing ($d_g{=}32$)                         & 49K  & 2.8406 \\
        MLP mixing ($d_g{=}64$)                         & 98K  & 2.8006 \\
        \name-s ($d_g{=}32$, $L{=}1$)                   & 37K  & 2.6948 \\
        \name-s ($d_g{=}32$, $L{=}2$)                   & 74K  & \textbf{2.6899} \\
        \bottomrule
    \end{tabular}
    \vspace{-5pt}
\end{table}

%% file: tables/throughput.tex
\begin{table}[htbp]
    \caption{Training throughput analysis of \name-s vs.\ the standard MoE. Tok/s ratios relative to the standard MoE in parentheses.}
    \vspace{-5pt}
    \label{tab:throughput}
    \centering
    \small
    \setlength{\tabcolsep}{4pt}
    \begin{tabular}{l|ccc}
        \toprule
        Model & Params & FLOPs (E) & Tok/s (M) \\
        \midrule
        Std.\ MoE         & 184.75M & 8.931 & 0.7744 (1.000) \\
        \name-s ($L{=}1$) & 184.69M & 8.926 & 0.7627 (0.985) \\
        \name-s ($L{=}2$) & 185.02M & 8.951 & 0.7404 (0.956) \\
        \bottomrule
    \end{tabular}
    \vspace{-5pt}
\end{table}

%% file: tables/dag_moe_large_downstream.tex
\begin{table*}[!t]
    \centering
    \caption{Downstream accuracy ($\uparrow$) of \name-l vs.\ MoE-l after fine-tuning on the same instruction-tuning mixture. HellaSwag is evaluated 10-shot, MMLU 5-shot, others 0-shot. Best per column in bold.}
    \vspace{-5pt}
    \label{tab:dag_moe_large_pretrain_downstream}
    \begin{tabular}{c|ccccccc|c}
        \toprule
         Accuracy $\uparrow$ & PIQA & ARC-e & HellaSwag & GPQA & Lambada & MMLU & BBH & Average\\
        \midrule
        MoE-l   & 47.52 & 24.34 & \textbf{25.90} & 21.72 & 8.11 & \textbf{24.17} & 16.65 & 24.06\\
        \name-l & \textbf{50.67} & \textbf{25.57} & 25.73& \textbf{27.78} & \textbf{11.57} & 24.03 & \textbf{17.55} & \textbf{26.13}\\
        \bottomrule
    \end{tabular}
        \vspace{-10pt}
\end{table*}

%% file: tex/related_works.tex
\section{Related Works}
\textbf{Mixture of Experts (MoEs).} MoEs~\citep{first_moe, switchtransformer} have become a dominant paradigm for large-scale models such as LLMs, as they decouple parameter count from per-token compute and thus allow capacity to scale without a proportional increase in inference cost. Many state-of-the-art LLMs adopt MoE architectures, including Mixtral~\citep{mixtral}, DeepSeek-V3~\citep{deepseekv3}, and Qwen-3~\citep{qwen3}.

\textbf{Architecture improvements of MoEs.} Several works modify the core MoE design to enhance its capacity. MH-MoE~\citep{mh-moe} splits each input into multiple ``heads'' that route independently to experts, analogous to multi-head attention. CoE~\citep{coe} introduces an iterative routing strategy that selects top-$K$ experts over multiple rounds and refines the output step by step; this requires an independent router at each stage, so the routing cost grows linearly with the number of iterations, whereas \name routes only once and adds structural aggregation as a lightweight extension. DiEP~\citep{diep} also models an MoE layer with a DAG, but uses it as a differentiable search space for \emph{expert pruning}, targeting model compression rather than expert aggregation. Most related to our work, S$^\prime$MoRE~\citep{smore} introduces structural flexibility into MoE through a hierarchical router that selects experts across multiple stages, with experts at adjacent stages connected by a non-linear transformation, forming a tree-shaped computation. S$^\prime$MoRE, however, is designed as a parameter-efficient fine-tuning adapter rather than a standalone MoE backbone, and its structural form is fixed to a predefined tree. In contrast, \name treats expert composition as a general DAG and learns a token-adaptive structure end-to-end, generalizing prior structural variants beyond a fixed hierarchy.

\textbf{Scaling and routing of MoEs.} A growing body of work studies and improves MoE scaling. OLMoE~\citep{olmoe} and Skywork-MoE~\citep{skywork} conduct extensive empirical studies on the components and training strategies of MoE. On the routing side, several works propose alternative routing algorithms, including expert-choice routing~\citep{expert_routing} and RNN-based routing~\citep{rmoe}; others analyze the representation gap between router and token embeddings~\citep{xmoe}, or improve training stability by refining the load-balance loss in token-choice routing~\citep{balancefree, global_balance_loss}. On the scaling-law side, \citet{abnar2025parameters} characterizes the optimal sparsity level for MoE, \citet{ludziejewski2025joint} jointly studies data, model, and training strategy, and \citet{ling_mini} develops scaling laws for efficient MoE training. Granularity-oriented scaling~\citep{peer, fine_gra_moe}, the closest counterpart to \name's expressiveness axis, is complementary to the structural aggregation introduced in this work.

%% file: tex/conclusion_limitation.tex
\section{Conclusions and Limitations}
In this paper, we replace the simple weighted summation in MoE with structural aggregation. By formulating expert aggregation as a DAG, we enlarge the space of expert combinations and enhance flexibility without modifying the router or expert configurations. To this end, we propose \name, which dynamically learns an optimal DAG structure via a lightweight DAG-learning module. Across both language modeling and downstream tasks, \name consistently outperforms standard MoE, confirming its effectiveness. The current implementation, however, restricts the class of DAGs (e.g., a fixed number of nodes per depth and depth-adjacent connections), so only \theoref{theo:dag_expressive} fully transfers, while \proref{pro:dag_injective} applies to the realizable subclass and \theoref{theo:dag_moe_dp} should be read as motivation rather than a guarantee. Identifying the optimal DAG structure for a given token---and how to learn it effectively within the module---also remains underexplored. Finally, our evaluation is limited to small-scale training, and it is unclear how \name would perform at billion-parameter, trillion-token scale. We leave these directions to future work.

%% file: tex/impact_statement.tex
\section*{Impact Statement}
This paper presents work whose goal is to advance the field of Machine Learning. There are many potential societal consequences of our work, none of which we feel must be specifically highlighted here.

%% file: tex/app_proof.tex
\section{Detailed proofs of all theorems}
\label{app:proofs}
\subsection{Detailed proofs for expressiveness of DAG-MoE}
\label{app:proofs_expressiveness}

For completeness, we first restate the general formulation of DAG-style MoE. Let $x_i^l$ denote the output of node $v=(l,i)$. For a given DAG $G \in \mathcal{G}(K)$, the corresponding computation in DAG-style MoE can be formulated as:
\begin{equation}
    \label{eq:general_dag_moe_init_}
    x_i^0 = g_{\mathbf{k}[i]}(x)E_{\mathbf{k}[i]}(x),\quad i=1,\ldots,K,
\end{equation}
\begin{equation}
    \label{eq:general_dag_moe_agg_}
    x_i^l = AGG(\{x_j^k\mid (k, j)\in A_i^l\}), \quad i=1,\ldots, n(l),\; l=1,\ldots, L-1,
\end{equation}
\begin{equation}
    \label{eq:general_dag_moe_output_}
    y = AGG(\{x_j^k \mid (k, j) \in A^L_{1}\}).
\end{equation}

We leverage results from D-VAE~\citep{dvae}, which encodes computations over a DAG via an injective aggregation--update scheme executed in topological order. Concretely, for a DAG $G=(\mathcal{V},\mathcal{A})$ with an initial node $(0,0)$, D-VAE encodes the DAG via:
\begin{equation}
\label{eq:dvae_init}
x^{0}_{0} = \hat{x}^{0}_{0},
\end{equation}
\begin{equation}
\label{eq:dvae_update}
x^{l}_{i} = AGG( \{U(\hat{x}^{l}_{i}, x_k^j)\mid (k, j)\in A_i^l\}), \quad i=1,\ldots, n(l),\; l=1,\ldots, L-1,
\end{equation}
\begin{equation}
\label{eq:dvae_output}
y = AGG(\{x_j^k \mid (k, j) \in A^L_{1}\}).
\end{equation}
where $AGG$ (aggregation) and $U$ (node update) are injective, and $\hat{x}^{l}_{i}$ is the initial feature of node $(l,i)$. \citet{dvae} provide the following conclusion, which we restate here:

\begin{proposition}
\label{pro:dvae_injective}
    Let $G$ be a DAG with a single initial node $(0, 0)$. \eqnref{eq:dvae_init}-\eqnref{eq:dvae_output} can map $G$ to $y$ injectively if $AGG$ and $U$ are injective.
\end{proposition}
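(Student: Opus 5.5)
The plan is to argue by induction along a topological order of $G$. The inductive invariant is that each hidden state $x^l_i$ determines, injectively, the sub-DAG of $G$ that ``flows into'' node $(l,i)$.

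\textbf{Invariant.} For a node $v=(l,i)$, let $G_v$ denote the \emph{ancestor DAG} of $v$: the subgraph of $G$ induced on $v$ and all nodes having a directed path to $v$, with every node carrying its initial feature $\hat{x}$. I will take DAGs to be labeled by the indices $(l,i)$, equivalently assume that $\hat{x}^l_i$ encodes the position of the node in the fixed topological order, as in D-VAE. The claim to prove is that the map $G_v \mapsto x_v$ computed by \eqnref{eq:dvae_init}--\eqnref{eq:dvae_update} is injective.

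\textbf{Base case.} The unique initial node $(0,0)$ has $G_{(0,0)}$ equal to the single node with feature $\hat{x}^0_0$. Since $x^0_0=\hat{x}^0_0$, the invariant holds.

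\textbf{Inductive step.} Fix $v=(l,i)$ and assume the invariant for every predecessor $u=(k,j)\in A^l_i$; these all have depth $k<l$, so they are handled earlier in the order. Suppose $x_v$ is known.
\begin{enumerate}
\item By injectivity of $AGG$ over multisets, $x_v$ determines the multiset $\{U(\hat{x}^l_i,x_u)\mid u\in A^l_i\}$.
\item By injectivity of $U$, each element of that multiset determines both $\hat{x}^l_i$ and $x_u$. So we recover $v$'s own feature and the multiset $\{x_u\}$.
\item By the induction hypothesis, each $x_u$ determines $G_u$.
\item Then $G_v$ is reconstructed as $\bigcup_{u\in A^l_i} G_u$, together with the node $v$ and the edges $u\to v$ for $u\in A^l_i$.
\end{enumerate}
Hence two distinct ancestor DAGs cannot produce the same $x_v$. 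An empty predecessor set cannot occur, because $(0,0)$ is the only source.

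\textbf{Conclusion.} The output $y$ is $AGG$ applied to the states of the predecessors of the terminal node $(L,1)$. Repeating the argument of the inductive step, without $U$ as written in \eqnref{eq:dvae_output}, $y$ determines $G_{(L,1)}$. I will make explicit the standing assumption, implicit in the formulation, that every node of $G$ lies on a path from $(0,0)$ to $(L,1)$. Otherwise nodes not reaching the readout are invisible to $y$, and no encoder of this form could be injective. Under this assumption $G_{(L,1)}=G$, which gives injectivity of $G\mapsto y$.

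\textbf{Main obstacle.} The hard step is the gluing in step 4 of the inductive step. If the recovered $G_u$ were known only up to isomorphism, the union would be ambiguous whenever two predecessors share ancestors: we would know each ancestor DAG but not how they overlap. My first approach is the labeled convention above. Node identities, carried by $\hat{x}$ or by the fixed topological indexing $(l,i)$ that both D-VAE and our formulation use, make $\bigcup_u G_u$ well defined. I would check carefully that this convention is exactly what is needed later in \proref{pro:dag_injective}, where the node indices and the expert list $\mathbf{k}$ supply those identities.
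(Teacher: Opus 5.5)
The paper gives no argument of its own here, only a pointer to Theorem~2 of D-VAE. Your self-contained induction (invert $AGG$, then $U$, recurse on predecessors) is therefore the more explicit route, and you correctly locate the crux in the gluing of step~4. Your resolution of that step does not hold up, though. The indices $(l,i)$ never enter the recursion $x^l_i=AGG(\{U(\hat{x}^l_i,x^k_j)\mid (k,j)\in A^l_i\})$; only $\hat{x}$ and predecessor states do. So a ``fixed topological indexing'' cannot carry identities into $y$. Labeling the DAGs is also not equivalent to having $\hat{x}$ encode the labels: the former only enlarges the set of objects to be distinguished.

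What step~4 actually needs is the extra hypothesis that $\hat{x}$ injectively encodes node identity. That hypothesis is not in the statement, and it fails exactly where the paper uses the result: in the proof of Proposition~\ref{pro:dag_injective} every non-source node gets $\hat{x}=\mathbf{0}$. Without it, injectivity on DAG structures is false. Let $G_1$ have edges $s\to a_1\to b_1\to t$ and $s\to a_2\to b_2\to t$, and let $G_2$ have edges $s\to a\to b_1\to t$ and $a\to b_2\to t$, with all non-source features $\mathbf{0}$. In both DAGs:
\begin{itemize}
\item every $a$-node has state $X_a=AGG(\{U(\mathbf{0},\hat{x}^0_0)\})$;
\item every $b$-node has state $X_b=AGG(\{U(\mathbf{0},X_a)\})$;
\item $y=AGG(\{X_b,X_b\})$.
\end{itemize}
Yet $G_1$ has six nodes and $G_2$ five. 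This is precisely the overlap ambiguity you anticipated, and no choice of injective $AGG,U$ removes it. Inserting a single expert node between $s$ and the $a$-nodes shows that the expert list $\mathbf{k}$ cannot supply identities for the aggregation nodes either.

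What steps 1--3 do prove, with no labeling convention, is that $x_v$ determines the unfolding tree at $v$. This is the rooted tree of all directed paths ending at $v$, with their features, i.e.\ the computation performed at $v$. The provable statement is therefore one of two things: injectivity modulo this equivalence, or injectivity on DAGs under an explicit identity-encoding assumption on $\hat{x}$. The first version is all the separation argument in Theorem~\ref{theo:dag_expressive} uses, provided the leaf states are pairwise distinct.

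Two smaller repairs are also needed.
\begin{itemize}
\item Injectivity compares two DAGs. So prove ``$x_v=x'_{v'}$ implies the unfolding trees at $v$ and $v'$ coincide'' by induction on the depth of $v$, rather than running along a single topological order.
\item The source state must be distinguishable from aggregated states, e.g.\ $\hat{x}^0_0\notin\operatorname{range}(AGG)$. Injectivity of $AGG$ and $U$ alone permits $\hat{x}^0_0=AGG(\{U(\hat{x}_a,\hat{x}^0_0)\})$, which gives $s\to t$ and $s\to a\to t$ the same $y$.
\end{itemize}
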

\begin{proof}
See Theorem~2 in D-VAE~\citep{dvae}.
\end{proof}

With \proref{pro:dvae_injective} in place, we are ready to prove \proref{pro:dag_injective}, which we restate here:
\begin{proposition}
\label{pro:dag_injective_}
Given a top-$K$ expert list $\mathbf{k}$, any DAG-style MoE satisfying \eqnref{eq:general_dag_moe_init}-\eqnref{eq:general_dag_moe_output} can injectively encode any $G \in \mathcal{G}(K)$ if $AGG$ is injective.
\end{proposition}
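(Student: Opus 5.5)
The plan is a reduction to \proref{pro:dvae_injective}: rewrite the DAG-style MoE recursion \eqnref{eq:general_dag_moe_init_}--\eqnref{eq:general_dag_moe_output_} as an instance of the D-VAE encoder \eqnref{eq:dvae_init}--\eqnref{eq:dvae_output}, then read off injectivity.

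\textbf{Step 1: build an augmented DAG $\tilde G$.} Given $G\in\mathcal{G}(K)$ with expert list $\mathbf{k}$, add a virtual initial node $(0,0)$ carrying a fixed feature $\hat{x}^0_0$. Add an edge from $(0,0)$ to each expert node $(0,i)$, $i=1,\ldots,K$. The expert nodes then play the role of depth-one nodes in the D-VAE indexing, each with the single predecessor $(0,0)$. Their initial features are set to $\hat{x}^0_i = g_{\mathbf{k}[i]}(x)E_{\mathbf{k}[i]}(x)$. Every intermediate node $(l,i)$ with $l\ge 1$ keeps its predecessor set $A^l_i$ and is given a constant, node-independent initial feature $\hat{x}^l_i=c$. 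The map $G\mapsto\tilde G$ is clearly injective, since $G$ is recovered by deleting $(0,0)$.

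\textbf{Step 2: define the update $U$.} Choose an injective $U$ such that the D-VAE recursion on $\tilde G$ reproduces the DAG-style MoE recursion. Concretely, $U(\hat{x},x')$ should return $\hat{x}$, possibly encoded injectively, when the predecessor is the virtual node, and return $x'$, tagged injectively with the constant $c$, otherwise. Then:
\begin{itemize}
\item each expert node receives $AGG(\{U(\hat x^0_i,\hat x^0_0)\})$, which is an injective function of $x^0_i$;
\item each subsequent node computes $AGG(\{U(c,x^k_j)\mid (k,j)\in A^l_i\})$, which differs from \eqnref{eq:general_dag_moe_agg_} only by composing $AGG$ with the fixed injective map $U(c,\cdot)$;
\item composing injective maps keeps $AGG$ injective over multisets, so these modified aggregators are still admissible in \eqnref{eq:general_dag_moe_agg_}.
\end{itemize}
Equivalently, the DAG-style MoE with aggregator $AGG' = AGG\circ U(c,\cdot)$ is a D-VAE encoder of $\tilde G$. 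Applying \proref{pro:dvae_injective} then shows that $\tilde G\mapsto y$ is injective, and combining this with Step 1 shows that $G\mapsto y$ is injective.

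\textbf{Step 3 (main obstacle): make ``encoding $G$'' precise when node features are expert outputs.} D-VAE's injectivity is over DAGs with node types drawn from a countable set. Here the leaf features are the real-valued expert outputs, so I would fix $\mathbf{k}$ and $x$ and treat the $K$ expert outputs as $K$ distinct labels. I would argue they are generically distinct, or else index them by position $i$ so that the leaves are labeled. Injectivity is then over labeled DAGs in $\mathcal{G}(K)$, which is exactly what is needed to distinguish both structures and permutations of experts.

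I would also justify the existence of injective $AGG$ on the relevant countable or bounded domain via the DeepSets/GIN results already cited in the paper. Finally, I would note that applying $AGG$ at the final node in \eqnref{eq:general_dag_moe_output_} matches \eqnref{eq:dvae_output} verbatim, so no extra argument is needed at the readout.
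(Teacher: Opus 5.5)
Your proposal follows essentially the same route as the paper's proof: add a virtual source $(0,0)$, treat the $K$ expert nodes as depth-one nodes whose only predecessor is that root, absorb $U$ into $AGG$, and invoke \proref{pro:dvae_injective}. The one real difference is at the expert layer, and it favors your version. The paper puts $x$ on the root, gives every other node the feature $\mathbf{0}$, and lets $g_{\mathbf{k}[i]}E_{\mathbf{k}[i]}$ act as the aggregator; once rewritten as $x^1_i=AGG(\{U(\mathbf{0},x^0_0)\})$ with one shared $AGG$, this would give all $K$ expert nodes the same state. Your choice of storing the expert outputs as the initial features of those nodes (with a constant $c$ elsewhere) keeps a single shared $AGG$ and keeps the leaves distinguishable, which is exactly what \theoref{theo:dag_expressive} later relies on.
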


\begin{proof}
To prove the proposition, we reduce \eqnref{eq:general_dag_moe_init_}--\eqnref{eq:general_dag_moe_output_} to a special case of the D-VAE encoder in \eqnref{eq:dvae_init}--\eqnref{eq:dvae_output}. Before doing so, we note several differences between the definitions in DAG-style MoE and D-VAE.
\begin{itemize}
    \item In D-VAE, any DAG $G$ is assumed to have a single start node $(0, 0)$. Here, however, we have $K$ start nodes, each corresponding to the output of one expert. To align with D-VAE, we observe that all representations are generated from the input token $x$, so $x$ itself can serve as node $(0, 0)$ to match \eqnref{eq:dvae_init}. Then the previous $K$ start nodes become nodes at iteration 1, and so on.
    \item In D-VAE, each node in $G$ is assumed to have an initial feature $\hat{x}^l_i$ that is integrated via the injective $U$ function, whereas DAG-style MoE has no such per-node initial feature. To align DAG-style MoE with D-VAE, we assume that every node in $G$ has the same $\boldsymbol{0}$-initialized input feature, except for node $(0,0)$. This modification does not affect the expressiveness of DAG-style MoE since no initial features are used in the original formulation.
\end{itemize}
Given these changes, we rewrite the formulation of DAG-style MoE as follows:

\begin{equation}
x^{0}_{0} = x,
\end{equation}

\begin{equation}
\label{eq:dag_to_dvae_iteration1}
x^1_{i} = g_{\mathbf{k}[i]}(x^0_0)E_{\mathbf{k}[i]}(U(\boldsymbol{0}, x^0_0)), \quad i=1,\ldots,K,
\end{equation}

\begin{equation}
x^{l}_{i} = AGG( \{U(\boldsymbol{0}, x_j^k)\mid (k, j)\in A_i^l\}), \quad i=1,\ldots, n(l),\; l=2,\ldots, L,
\end{equation}

\begin{equation}
y = AGG(\{x_j^k \mid (k, j) \in A^{L+1}_{1}\}).
\end{equation}

The above equations almost coincide with the D-VAE encoder, with a few differences. First, we add $U$ to DAG-style MoE to match the D-VAE equation; however, since the initial feature here is $\boldsymbol{0}$, this does not affect the expressiveness of the model, and $U$ can be absorbed into $AGG$ to form a new aggregation function. Second, at the first iteration, every node has exactly one incoming predecessor under our DAG definition, so using $E(\cdot)$ as the $AGG$ function is equivalent to the original formulation, since each node aggregates from a singleton parent. Meanwhile, $g_i$ and $E_i$ can easily be implemented as injective functions. Consequently, \eqnref{eq:dag_to_dvae_iteration1} can be further rewritten as:

\begin{equation}
\label{eq:dag_to_dvae_iteration1_}
x^1_{i} = AGG(\{U(\boldsymbol{0}, x^0_0)\}), \quad i=1,\ldots, K.
\end{equation}

Now the DAG-style MoE process matches D-VAE exactly, and the proposition follows directly from \proref{pro:dvae_injective}. We omit further details.
\end{proof}

Next, we prove \theoref{theo:dag_expressive}, which we restate here:
\begin{theorem}
\label{theo:dag_expressive_}
Given a top-$K$ expert list $\mathbf{k}$, there exists a DAG-style MoE satisfying \eqnref{eq:general_dag_moe_init_}--\eqnref{eq:general_dag_moe_output_} that is strictly more powerful than the standard MoE in \eqnref{eq:moe}, provided that $AGG$ is injective.
\end{theorem}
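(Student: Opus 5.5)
Our plan is the usual two-part argument for "strictly more powerful": (i) every standard MoE can be simulated by some DAG-style MoE, and (ii) some DAG-style MoE separates inputs that the standard MoE provably cannot.

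For (i), fix the top-$K$ list $\mathbf{k}$. We take the depth-one DAG $G_0 \in \mathcal{G}(K)$ with $L=1$ and a single output node whose predecessor set $A^1_1=\{(0,1),\ldots,(0,K)\}$ contains all initial nodes. By \eqnref{eq:general_dag_moe_init_}, these initial nodes are exactly the gated expert outputs $g_{\mathbf{k}[i]}(x)E_{\mathbf{k}[i]}(x)$. Choosing $AGG$ as multiset summation then reproduces \eqnref{eq:moe} exactly. If $AGG$ must be injective, we instead use an injective multiset function of the form $\phi(\sum_j f(\cdot))$ (as in \citet{deepset,gin}). Since it is injective on multisets, the weighted sum is a function of its output, so it is at least as discriminative as \eqnref{eq:moe}. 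In either reading, the DAG-style family contains or dominates standard MoE.

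For (ii), we exhibit two computations that standard MoE identifies but DAG-style MoE distinguishes. Take $K\ge 3$ (or $K\ge 2$ using a chain), and let $\pi$ swap two expert positions. Consider the tree DAG $G$ from \figref{fig:moe_structure} (middle), together with its image under $\pi$, for instance swapping experts $1$ and $3$. The standard MoE output depends only on the multiset $\{g_{\mathbf{k}[i]}E_{\mathbf{k}[i]}\}$, which is permutation-invariant, so it returns the same $y$ for both. By \proref{pro:dag_injective_}, whose proof goes through the D-VAE reduction and \proref{pro:dvae_injective}, an injective $AGG$ encodes the labelled DAG injectively. Hence the two configurations yield different outputs whenever they are different labelled structures, i.e.\ whenever the expert outputs at the swapped positions differ. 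The same argument applies to two non-isomorphic DAGs over the same experts, such as the edgeless star $G_0$ versus the tree.

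The main obstacle is making this separation rigorous, because injectivity of $AGG$ on multisets must yield distinct outputs. We would argue directly, level by level. The multiset $\{x^0_1,x^0_2\}$ differs from $\{x^0_3,x^0_2\}$ when $x^0_1\neq x^0_3$, so injectivity gives $x^1_1$ different between the two configurations. Distinct multisets then propagate upward to $y$. This requires choosing an input $x$ for which the selected experts produce distinct gated outputs, which we would assume generically or arrange by picking suitable expert weights. We would cite \proref{pro:dag_injective_} for general DAGs and use this explicit small example to settle strictness.
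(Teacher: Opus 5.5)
Your overall route is the paper's. Inclusion goes through a trivial DAG whose single readout aggregates all $K$ gated outputs with $AGG$ taken as a sum. Your version, $L=1$ with $A^1_1=\{(0,1),\ldots,(0,K)\}$, is actually the correctly indexed form of the paper's ``$L=0$'' construction. Your $\phi(\sum f)$ remark also handles the clash between ``$AGG$ injective'' and ``$AGG$ is a sum,'' which the paper ignores. Strictness goes through two leaf-pairings of the four-leaf tree that a permutation-invariant sum cannot tell apart.

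The gap is in the separation step. From ``$x^1_1$ differs between the two configurations'' you conclude that distinct multisets propagate up to $y$. But the root aggregates the \emph{multiset} $\{x^1_1,x^1_2\}$, so a change in $x^1_1$ is invisible if $x^1_2$ takes over the old value of $x^1_1$. Write $a_i=x^0_i$. Your two configurations (the tree, and the tree with experts $1$ and $3$ swapped) feed the root $\{AGG\{a_1,a_2\},AGG\{a_3,a_4\}\}$ and $\{AGG\{a_3,a_2\},AGG\{a_1,a_4\}\}$. By injectivity these coincide iff $\{\{a_1,a_2\},\{a_3,a_4\}\}=\{\{a_2,a_3\},\{a_1,a_4\}\}$, i.e.\ iff $a_1=a_3$ or $a_2=a_4$. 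So your stated condition (``whenever the expert outputs at the swapped positions differ'') is false: with $a_2=a_4$ and $a_1\neq a_3$ the outputs agree. Leaning on the injectivity proposition does not close this either. The tree and its relabelled copy are isomorphic as unlabelled DAGs, so any separation must come from the leaf values, which is exactly the computation above.

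The fix is short. Assume $a_1\neq a_3$ and $a_2\neq a_4$, e.g.\ pairwise distinct gated outputs; your last paragraph gestures at this, and you can arrange it by choice of expert weights. Then do the multiset-of-multisets comparison at the root explicitly instead of arguing node by node. The paper's own proof has the same weak point: for its pairings $\{12\mid 34\}$ versus $\{13\mid 24\}$ it assumes only $x^0_2\neq x^0_3$, yet the outputs also coincide when $x^0_1=x^0_4$.

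Two smaller points. First, the four-leaf tree needs $K\geq 4$; for larger $K$, attach the remaining leaves directly to the root, and multiset cancellation preserves the separation. Second, your $K\geq 3$ or chain variants put a leaf and an internal node into the same multiset. There you must additionally exclude coincidences such as $AGG\{a\}=a$, which injectivity alone does not forbid.
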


\begin{proof}
First, it is straightforward that DAG-style MoE is at least as powerful as standard MoE. 
Set $G$ to have $K$ isolated nodes (no interactions among the top-$K$ experts; see the left panel of \figref{fig:moe_structure}) and take $L=0$. 
Then \eqnref{eq:general_dag_moe_agg_} vanishes and \eqnref{eq:general_dag_moe_output_} reduces to a readout over $\{x_i^0=g_i(x)E_i(x)\}_{i=1}^{K}$. 
With $\mathrm{AGG}$ chosen as summation, this exactly recovers the standard MoE weighted sum.

Next, we show that DAG-style MoE can map different DAG structures to different outputs, whereas standard MoE always produces the same representation for a fixed set of selected experts. 
Consider the tree in the middle of \figref{fig:moe_structure} and define two DAGs that differ only in how two leaves are paired at the first depth:
\[
\begin{aligned}
G_1=(\mathcal{V}_1,\mathcal{A}_1),\quad 
\mathcal{A}_1=\{&
A_1^1=\{(0,1),(0,2)\},\;
A_2^1=\{(0,3),(0,4)\}, \;
A_1^2=\{(1,1),(1,2)\} \},
\end{aligned}
\]
\[
\begin{aligned}
G_2=(\mathcal{V}_2,\mathcal{A}_2),\quad 
\mathcal{A}_2=\{&
A_1^1=\{(0,1),(0,3)\},\;
A_2^1=\{(0,2),(0,4)\}, \;
A_1^2=\{(1,1),(1,2)\} \},
\end{aligned}
\]
Let $x^{0}_{g,i}$ denote the initial representation of node $(0,i)$ in graph $G_g$. 
Assume the four leaf representations are the same in both graphs, i.e., $x^{0}_{1,i}=x^{0}_{2,i}$ for $i=1,\ldots,4$, and that $x^{0}_{\cdot,2}\neq x^{0}_{\cdot,3}$. 
Then the predecessor multisets at depth~1 differ between $G_1$ and $G_2$ (one pairs $(0,1)$ with $(0,2)$, the other pairs $(0,1)$ with $(0,3)$); hence the depth-1 node states differ because $\mathrm{AGG}$ is injective on multisets. By composition of injective maps, all downstream states---and therefore the final outputs---also differ. Formally, by \proref{pro:dag_injective_}, DAG-style MoE maps $G_1$ and $G_2$ to distinct outputs. In contrast, standard MoE aggregates the same leaf set $\{x^{0}_{\cdot,i}\}_{i=1}^{4}$ by a permutation-invariant weighted sum, yielding identical outputs for $G_1$ and $G_2$.
Thus, DAG-style MoE strictly separates these two structures while standard MoE does not, which concludes the proof.
\end{proof}

\subsection{Detailed discussion on dynamic programming}
\label{app:proofs_reasoning}

\subsubsection{Definition of dynamic programming}
\label{app:dp_definition}
In this section, we formally define the dynamic programming problem following previous work~\citep{feng2023towards}. A general DP algorithm can be characterized via three key ingredients: a state space $\mathcal{I}$, a transition function $T$, and an aggregation function $AGG$. Given a DP problem with $N$ input sequences $\boldsymbol{s}^{(1)}, \cdots,\boldsymbol{s}^{(N)}$, we denote the problem size as the vector $\boldsymbol{n}=(|\boldsymbol{s}^{(1)}|, \cdots, |\boldsymbol{s}^{(N)}|)$. Given the fixed problem size $\boldsymbol{n}$, there is an associated state space $\mathcal{I}_{\boldsymbol{n}}\subset \mathcal{I}$ representing the finite set of decomposed subproblems, where each state $i\in \mathcal{I}_{\boldsymbol{n}}$ is an index signifying a specific subproblem. The size of the state space $\mathcal{I}_{\boldsymbol{n}}$ grows with the problem size $\boldsymbol{n}$. We denote by $dp(i)$ the answer along with other information about the DP process of subproblem $i$. Furthermore, there is a partial order between different states: we say state $j$ precedes state $i$ (denoted as $j \prec i$) if subproblem $j$ must be solved before subproblem $i$. This partial order naturally induces a DAG over the state space, thereby establishing a reasoning chain that can be approximated by DAG-style MoE.

In the paper, we focus on a restricted setting where each state $i$ only depends on (i) a finite number of tokens in the input sequence $\boldsymbol{s}$ and (ii) a finite number of previous states. Under this assumption, the transition function $T$ can be generally written as:
\begin{equation}
\label{eq:dp_general}
\begin{aligned}
    dp(i)&=f(\boldsymbol{n}, i, s_{\boldsymbol{g}(\boldsymbol{n}, i)}, dp(\boldsymbol{h}(\boldsymbol{n}, i))) \\
         &=f(\boldsymbol{n}, i, s_{g_1(\boldsymbol{n}, i)}, \cdots, s_{g_{J}(\boldsymbol{n}, i)}, dp(h_1(\boldsymbol{n}, i)), \cdots, dp(h_{P}(\boldsymbol{n}, i))),
\end{aligned}
\end{equation}
where the functions $f,\boldsymbol{g}, \boldsymbol{h}$ fully determine the transition function $T$ and have the following forms: $f:\mathbb{N}^{N} \times \mathcal{I}\times \mathcal{X}^{J}\times \mathcal{Y}^{P} \rightarrow \mathcal{Y}$, $\boldsymbol{g}: \mathbb{N}^N \times \mathcal{I} \rightarrow (\mathbb{N} \cup \{\emptyset\})^J$, and $\boldsymbol{h}: \mathbb{N}^N \times \mathcal{I} \rightarrow (\mathcal{I} \cup \{\emptyset\})^P$. Here, the state space $\mathcal{I}$, input space $\mathcal{X}$, and DP output space $\mathcal{Y}$ can be arbitrary domains, and $J, P$ are constant integers. If state $i$ depends on fewer than $J$ input tokens or fewer than $P$ previous states, we use the special symbol $\emptyset$ as a placeholder, so that all terms $s_{\emptyset}$ and $dp(\emptyset)$ are unused in $f$. Concretely, $\boldsymbol{g}(\boldsymbol{n}, i)$ indicates the input indices used to compute the transition $dp(i)$, while $\boldsymbol{h}(\boldsymbol{n}, i)$ indicates the previous DP states required. After solving all subproblems, the aggregation function $AGG$ is used to combine all results and obtain the final answer. We consider a general class of aggregation functions with the following form:
\begin{equation}
    AGG(\{(i, dp(i)):i \in \mathcal{I}_{\boldsymbol{n}}\}) = u(\square_{i \in \mathcal{A}_{\boldsymbol{n}}}dp(i) ),
\end{equation}
where $\mathcal{A}_{\boldsymbol{n}} \subset \mathcal{I}_{\boldsymbol{n}}$ is the set of states to aggregate, $\square$ is an aggregation operator such as $\min$, $\max$, or $\sum$, and $u: \mathcal{Y} \rightarrow \mathcal{Z}$ is any function where $\mathcal{Z}$ denotes the space of possible answers. Next, we describe how to construct the DAG induced by a DP problem.

\begin{definition}
\label{def:dp_dag}
Given a DP problem with input sequences $\boldsymbol{s}^{(1)}, \cdots,\boldsymbol{s}^{(N)}$, problem size $\boldsymbol{n}=(|\boldsymbol{s}^{(1)}|, \cdots, |\boldsymbol{s}^{(N)}|)$, total input length $|\boldsymbol{n}|=\sum_t|\boldsymbol{s}^{(t)}|$, and the transition function specified in \eqnref{eq:dp_general}, we define the DAG $G_{dp}$ induced by the DP solving process as follows:
\begin{itemize}
    \item There are $|\boldsymbol{n}|$ \emph{input nodes} at depth $0$, indexed by input position $j=1,\ldots,|\boldsymbol{n}|$, where node $(0, j)$ stores the input token $s_j$.
    \item For each depth $l > 0$, each \emph{subproblem node} $(l, i)$ corresponds to a subproblem $i\in\mathcal{I}_{\boldsymbol{n}}$ and represents the solution $dp(i)$, such that for every $j \in \boldsymbol{h}(\boldsymbol{n}, i)$ the depth of node $(\cdot, j)$ is less than $l$.
    \item The last depth $L(dp)$ contains a single \emph{output node} $(L(dp), 1)$ representing the final answer $y$ of the DP problem.
    \item The adjacency list is $\mathcal{A}=\{A^l_i\}$, where for each interior subproblem node,
    $$A^l_i \;=\; \{(d(j), j)\mid j \in \boldsymbol{h}(\boldsymbol{n}, i)\} \;\cup\; \{(0, j)\mid j\in \boldsymbol{g}(\boldsymbol{n}, i)\}, \quad l=1,\ldots, L(dp)-1,$$
    with $d(j)$ denoting the depth of subproblem node $(\cdot, j)$.
    \item For the output node, $A^{L(dp)}_1=\{(d(i), i)\mid i \in \mathcal{A}_{\boldsymbol{n}}\}$, where $\mathcal{A}_{\boldsymbol{n}}\subseteq\mathcal{I}_{\boldsymbol{n}}$ is the set of subproblems aggregated by the final readout.
\end{itemize}
Two elements warrant further discussion. First, $d(i)$ is the depth of subproblem $i$ in the DAG, defined as the smallest iteration at which $dp(i)$ can be obtained---i.e., the smallest iteration at which all $dp(j)$ for $j \in \boldsymbol{h}(\boldsymbol{n}, i)$ are ready. Second, $L(dp)$ is the depth of the DAG $G_{dp}$, i.e., the smallest iteration at which the final DP answer is produced.
\end{definition}

\begin{figure}[h]
\centering
\includegraphics[width=\linewidth, trim=0cm 3cm 0cm 0cm, clip]{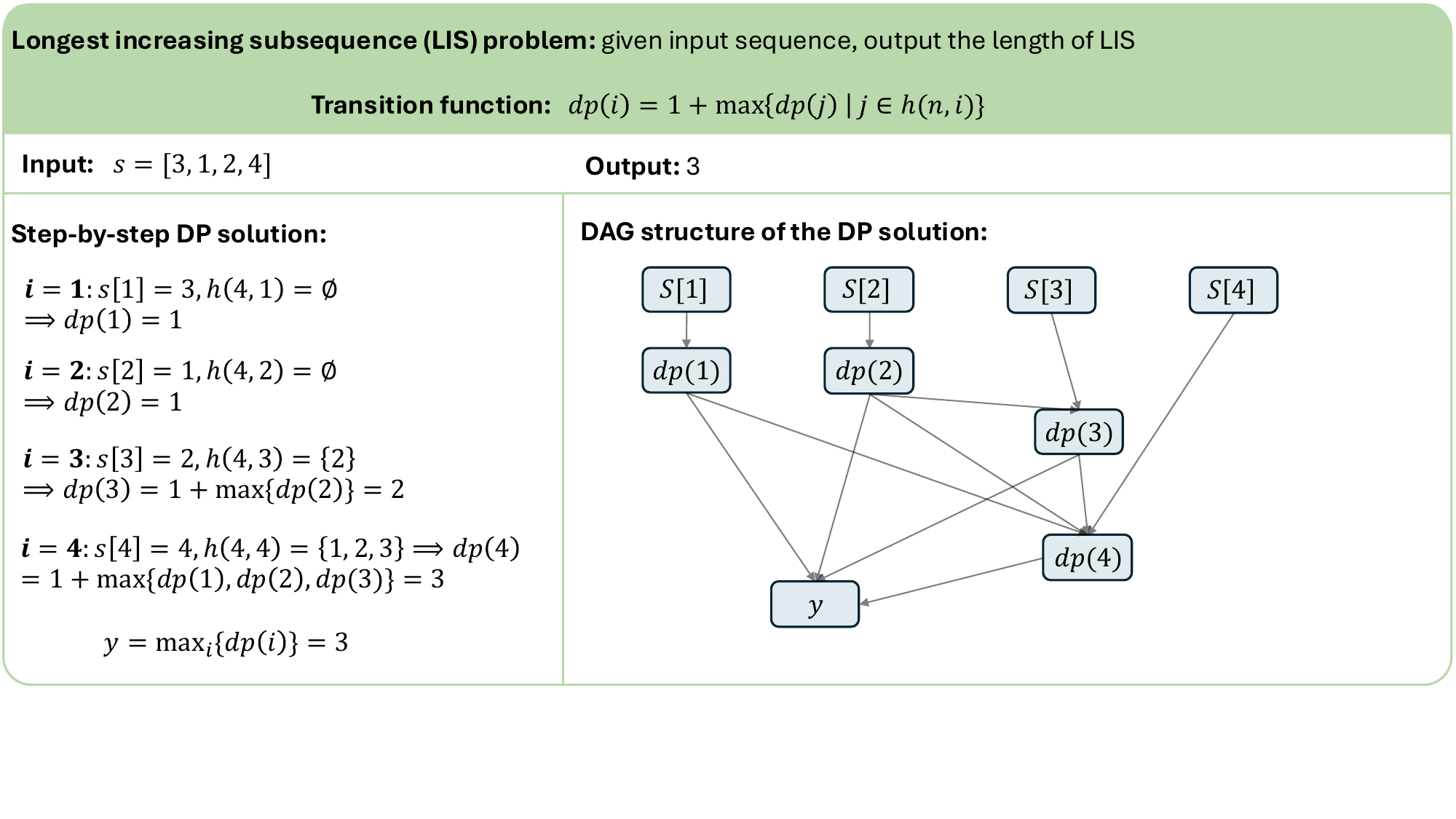}
\caption{An example of the LIS problem and the corresponding DAG structure.}
\vspace{-10pt}
\label{fig:lis_example}
\end{figure}

\subsubsection{Longest increasing subsequence problem and an example on how \name can simulate its solving process}
\label{app:lis_example}
In this section, we describe one representative DP problem: the longest increasing subsequence (LIS) problem. The LIS problem aims to compute the length of the longest increasing subsequence of an input sequence $\mathbf{s}\in \mathbb{N}^{n}$. We say $\tilde{\mathbf{s}}$ is a subsequence of $\mathbf{s}$ if there exist indices $1\leq i_1\leq\cdots\leq i_{|\tilde{\mathbf{s}}|}\leq n$ such that $\tilde{s}_k=s_{i_{k}}$ for all $k=1, \ldots,|\tilde{\mathbf{s}}|$. A sequence $\tilde{\mathbf{s}}$ is called increasing if $\tilde{s}_1 \leq \cdots \leq \tilde{s}_{|\tilde{\mathbf{s}}|}$. The LIS problem aims to find an increasing subsequence of $\mathbf{s}$ of maximal length. Let $h(n, i)=\{j\mid s_j < s_i\}$ be the predecessor index set, including all element indices $j$ such that $s_j < s_i$. The transition function of the LIS problem is:
\begin{equation}
\label{eq:lis_transition}
    dp(i) = 1 + \max(\{dp(j)\mid j\in h(n, i)\}), \quad \max(\emptyset)=0.
\end{equation}
The final solution can be obtained by:
\begin{equation}
\label{eq:lis_output}
    y = \max_i(\{dp(i)\mid i=1,\ldots,n\})
\end{equation}

Here we show a small example with the sequence $[3, 1, 2, 4]$ at the top of \figref{fig:lis_example}. By iteratively applying the transition function, we obtain the final answer 3, as illustrated on the left of \figref{fig:lis_example}. At the same time, the process can be converted into a DAG that specifies the computation, as shown on the right of \figref{fig:lis_example}.

Now, suppose we have a generic DAG-style MoE (in the sense of Equations~\ref{eq:general_dag_moe_init_}--\ref{eq:general_dag_moe_output_}) that selects top-$4$ experts and whose adjacency $\mathcal{A}$ matches $G(dp)$ for the LIS instance in \figref{fig:lis_example}. We feed the sequence into the model and query the answer at the \texttt{[ANS]} position. In the first attention block, the attention module gathers the entire input sequence into the \texttt{[ANS]} token (Block~1 of \theoref{theo:dag_moe_dp_}). At the MoE block, each expert is implemented as a linear projection that selects one element of the input sequence, forming the depth-$0$ leaves of $G(dp)$. Expert outputs are then aggregated according to $G(dp)$ to produce the final answer. By Assumption~4.3 of \citet{feng2023towards}, $AGG_{\mathrm{trans}}$ can approximate the LIS transition $dp(i) = 1 + \max(\{dp(j) \mid j \in h(n,i)\})$ at each interior node, and $AGG_{\mathrm{out}}$ at the readout approximates $y = \max_i dp(i)$. Thus, in principle, a single DAG-style MoE layer can simulate the entire LIS solving process in \figref{fig:lis_example}.

\subsection{Proof of \theoref{theo:dag_moe_dp}}
\label{app:proof_dag_moe_dp}
With the definitions above in place, we now prove \theoref{theo:dag_moe_dp}. The theorem is built upon Theorem 4.7 of \citet{feng2023towards} and inherits Assumptions 4.2--4.5 of \citet{feng2023towards}; we refer the reader to the original paper for the formal statements of these assumptions. We restate \theoref{theo:dag_moe_dp} below:

\begin{theorem}
\label{theo:dag_moe_dp_}
    For any integer $n\in \mathbb{N}$, and any DP problem satisfying Assumptions 4.2--4.5 of \citet{feng2023towards} with corresponding DAG $G(dp)$ and total input length $|\boldsymbol{n}|$ at most $O(K\log n)$, there exists a log-precision Transformer consisting of (i) one multi-head attention layer with $H=O(K\log n)$ heads and (ii) one generic DAG-style MoE block (Equations~\ref{eq:general_dag_moe_init_}--\ref{eq:general_dag_moe_output_}) with top-$K$ experts, $L \geq L(dp)$ iterations sharing the same parameters, and adjacency $\mathcal{A}$ specified by $G(dp)$, with hidden dimension $d=O(K\log n)$ and per-iteration parameter count $O(\mathrm{poly}(K))$, such that all parameter values are bounded by $O(\mathrm{poly}(n))$ and the output token is the correct DP answer.
\end{theorem}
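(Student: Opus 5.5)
The plan is a constructive simulation that adapts the proof of Theorem~4.7 of \citet{feng2023towards}. The change is to the computational axis. There, each CoT step is one autoregressive token and one pass through a constant-depth Transformer. Here, each DP step becomes one iteration of the shared-parameter $AGG$ inside the DAG-style MoE block, wired according to $G(dp)$ from Definition~\ref{def:dp_dag}. The construction has three blocks, built in order: an attention gathering block, an expert/leaf block, and a transition-plus-readout block. We then check the precision, width and parameter-magnitude bounds.

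\textbf{Block 1 (attention gathers the input).} Append an \texttt{[ANS]} token and use one multi-head attention layer with $H=O(K\log n)$ heads. Under log-precision, head $h$ attends almost exclusively to absolute position $h$, via a positional embedding with sharp $O(\mathrm{poly}(n))$-scaled inner products as in the copy lemmas of \citet{feng2023towards}. It writes the embedding of $s_h$ into a dedicated $O(1)$-dimensional slot of the \texttt{[ANS]} residual stream. Since $|\boldsymbol{n}|=O(K\log n)$, the concatenated input fits in $d=O(K\log n)$ coordinates. Rounding errors stay within $1/\mathrm{poly}(n)$ by the standard softmax-saturation argument.

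\textbf{Block 2 (experts produce the leaves).} Arrange the router so that, at the \texttt{[ANS]} position, the chosen $K$ experts are fixed. For example, take constant router vectors and make $\delta$ saturate, so that $g_{\mathbf{k}[i]}(x)$ is a known constant that can be absorbed. Each selected expert $E_{\mathbf{k}[i]}$ is a linear projection, realized as an FFN with ReLU pass-through, that reads a block of $O(\log n)$ input slots. This yields the depth-$0$ leaf representations $x^0_i$ of Equation~\ref{eq:general_dag_moe_init_}. The $|\boldsymbol{n}|$ input nodes of $G(dp)$ are distributed across these $K$ leaves, each leaf carrying its block tagged with positional indices. This is why $|\boldsymbol{n}|=O(K\log n)$ is needed.

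\textbf{Block 3 (shared $AGG$ simulates transitions).} At this point $\mathcal{A}$ is set equal to the adjacency of $G(dp)$. We must exhibit a single parameter-shared $AGG$, an MLP over a sum of encoded predecessors in the style of \citet{gin}, that at interior node $(l,i)$ outputs an encoding of $(i,dp(i))$ from the multiset $\{(j,dp(j))\}_{j\in\boldsymbol{h}(\boldsymbol{n},i)}\cup\{s_j\}_{j\in\boldsymbol{g}(\boldsymbol{n},i)}$. Each node state stores its state index $i$ in $O(\log n)$ bits, its value $dp(i)$, and a type flag. Because the sum is permutation-invariant, predecessors must be tagged by their role slot ($1,\dots,P$ or $1,\dots,J$) so that they land in disjoint coordinates; this lets the MLP recover the ordered tuple for $f$. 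Assumption~4.3 of \citet{feng2023towards} then supplies a constant-size log-precision MLP approximating $f$, and Assumption~4.2 supplies one approximating $\boldsymbol{g},\boldsymbol{h}$ if needed for consistency checks. The readout node uses Assumption~4.5 to implement $u(\square\,\cdot)$, applying $\max/\min$ via the MLP constructions there, or $\sum$ directly. We then argue by induction on depth in topological order that every node at depth $l$ holds the exact $dp$ value. Since $L\ge L(dp)$, any extra iterations act as identity via a residual/flag gate. Finally, we tally costs: each iteration uses the same $O(\mathrm{poly}(K))$-sized MLP acting on $O(K\log n)$-width states, with weights bounded by $O(\mathrm{poly}(n))$ as inherited from the cited constructions.

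I expect Block~3 to be the main obstacle. The difficulty is reconciling a single shared, multiset-injective $AGG$ with a transition $f$ that depends on the ordered arguments and on $i$. Injectivity alone (Proposition~\ref{pro:dag_injective}) does not give a bounded-precision, polynomially-weighted computation of $f$. The first approach I would try is the role-slot tagging above, with the multiplexing carried out by an MLP in the spirit of Lemma~C.\,x of \citet{feng2023towards}. If that fails, I would fall back on a one-hot position encoding for $j$ together with a gated select, accepting that the width per node then becomes $O(K\log n)$, which is still within budget.
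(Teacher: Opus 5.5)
Your proposal is correct at the paper's level of rigor and follows its proof essentially step for step: one attention layer with $H=O(K\log n)$ positional copy heads gathers the input at \texttt{[ANS]}; each of the $K$ experts is a linear selector that emits a block of $O(\log n)$ input tokens as a depth-$0$ leaf; $\mathcal{A}$ is hard-coded to $G(dp)$; a single parameter-shared $AGG$ (an injective sum/min/max followed by a constant-size MLP on the constant-size predecessor multiset) realizes $f$ via Assumption~4.3; and the readout realizes $u(\square\,\cdot)$ via Assumptions~4.3 and~4.5. The paper settles your Block~3 worry by that bare appeal to Assumption~4.3 and says nothing about the router, idle iterations when $L>L(dp)$, or argument order, so your additions only refine it; note, however, that role-slot tags cannot be written by the predecessor itself, because its single output $x^{d(j)}_j$ is fed unchanged to every successor and can play different roles there (e.g.\ in edit distance $dp(a,b)$ enters the transitions of $dp(a+1,b)$, $dp(a,b+1)$ and $dp(a+1,b+1)$ in three different roles), so the slot assignment must be made on the receiving side, which requires each node to recover its own index $i$ from its predecessor multiset alone (the generic formulation has no per-node features)---a point the paper's proof also leaves unaddressed.
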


\begin{proof}
We first highlight the differences between \theoref{theo:dag_moe_dp_} and Theorem 4.7 of \citet{feng2023towards}. Theorem 4.7 considers DP problems of input length up to $n$ and shows that an autoregressive Transformer can solve them step by step via a Chain-of-Thought mechanism: by iteratively decoding, it produces an $O(\mathrm{poly}(n))$-length sequence storing all input tokens and intermediate DP states, and uses attention to retrieve the information needed at each step. In \theoref{theo:dag_moe_dp_}, instead, a single DAG-style MoE block simulates the DP solving process within one forward pass via the DAG itself, without intermediate decoding. The trade-off is that we only have $K$ experts, all derived from a single input token. Under log-precision (see \citep{feng2023towards} for the formal definition), each scalar can store $O(\log n)$ bits, so a single token of dimension $d$ can store $O(d \log n)$ bits in total. Setting $d = O(K\log n)$ thus permits storing $O(K\log n)$ input tokens in one representation, which constrains the input length to $|\boldsymbol{n}|=O(K\log n)$. Below we describe how to construct the two blocks.

\textbf{Input format.} We represent the DP instance as a sequence of tokens:
$$
\boldsymbol{s}^{(1)} \;\big|\; \cdots \;\big|\; \boldsymbol{s}^{(N)} \;\big|\; \text{[ANS]},
$$
where $\boldsymbol{s}^{(t)}$ is the $t$-th input sequence and \texttt{[ANS]} is a designated query position at which the model emits the DP answer.

\textbf{Block 1 (gather inputs into a single token).} The first block uses one multi-head attention layer with $H=|\boldsymbol{n}|=O(K\log n)$ heads to gather all input tokens into the representation at the \texttt{[ANS]} position. We allocate one attention head per input position $j\in[|\boldsymbol{n}|]$: head $j$ uses position-based query and key vectors so that, at the \texttt{[ANS]} position, the query inner product is maximized exclusively at key position $j$; the value at position $j$ is the input token embedding $s_j$, which the head writes into a dedicated $O(\log n)$-dimensional slice of the output. Each head thus realizes a positional COPY operation in the sense of Lemma~C.7 of \citet{feng2023towards}, and the simultaneous use of multiple heads to copy several token-indexed sources into different output slices follows the same pattern as Block~3 in the proof of Theorem~4.7 of \citet{feng2023towards}, which uses $K+J$ heads to copy $K+J$ token-indexed embeddings in parallel. The only departure from the setting of \citet{feng2023towards} is that here the number of heads $H$ scales with input length rather than being constant, which is the unavoidable cost of collapsing the entire DP execution into a single forward pass without intermediate decoding. After this block, the representation at the \texttt{[ANS]} position contains all input tokens of $\boldsymbol{s}^{(1)},\ldots,\boldsymbol{s}^{(N)}$ concatenated across slices, occupying $O(K\log n)$ dimensions in total.

\textbf{Block 2 (DAG-style MoE simulates $G(dp)$).} Given the gathered token representation, we use one generic DAG-style MoE block (Equations~\ref{eq:general_dag_moe_init_}--\ref{eq:general_dag_moe_output_}) to execute the DP. We hard-code the adjacency $\mathcal{A}$ to coincide with $G(dp)$ (Definition~\ref{def:dp_dag}); this is admissible because, in the generic formulation, $\mathcal{A}$ is a structural specification rather than learnable. By Assumption~4.4 of \citet{feng2023towards}, there is a feasible topological ordering of $\mathcal{I}_{\boldsymbol{n}}$, so we may identify each DP subproblem $i$ with a node $(d(i), i)$ in the DAG-MoE block at the depth $d(i)$ given by Definition~\ref{def:dp_dag}.

We construct the experts and aggregation as follows. Each of the $K$ experts is implemented as a linear projection that selects $\lceil |\boldsymbol{n}|/K \rceil = O(\log n)$ input tokens from the gathered representation and emits them as the depth-$0$ representation $x_i^0$ (Equation~\ref{eq:general_dag_moe_init_}). This partitions the input across the $K$ experts and forms the leaves of $G(dp)$.

For each interior node $(l, i)$, we use the aggregation function $AGG_{\mathrm{trans}}$ to realize the DP transition $f(\boldsymbol{n}, i, s_{\boldsymbol{g}(\boldsymbol{n}, i)}, dp(\boldsymbol{h}(\boldsymbol{n}, i)))$ in \eqnref{eq:dp_general}:
\begin{equation}
    x^l_i \;=\; AGG_{\mathrm{trans}}\!\left(\{x^{k}_j \mid (k, j)\in A^l_i\}\right) \;\approx\; dp(i),
\end{equation}
where $A^l_i$ collects both the input-token predecessors $\{(0,j) : j \in \boldsymbol{g}(\boldsymbol{n}, i)\}$ and the subproblem predecessors $\{(d(j), j) : j \in \boldsymbol{h}(\boldsymbol{n}, i)\}$. Since $J$ and $P$ in \eqnref{eq:dp_general} are constants (Assumption~4.3 of \citet{feng2023towards}), $|A^l_i|$ is constant, so $AGG_{\mathrm{trans}}$ acts on a constant-size multiset. We implement $AGG_{\mathrm{trans}}$ as a multiset-injective sum (or min/max) followed by an MLP~\citep{deepset, gin}; by Assumption~4.3 of \citet{feng2023towards}, $f$ can be approximated by such an MLP of constant size with parameter values bounded by $O(\mathrm{poly}(n))$. Since the same module is reused at every depth, the per-iteration parameter count is $O(\mathrm{poly}(K))$ and is independent of the DAG depth $L(dp)$.

At the final depth, the readout (Equation~\ref{eq:general_dag_moe_output_}) implements the DP aggregation $u(\square_{i\in\mathcal{A}_{\boldsymbol{n}}} dp(i))$:
\begin{equation}
    y \;=\; AGG_{\mathrm{out}}\!\left(\{x_j^k \mid (k,j) \in A_1^{L(dp)}\}\right) \;\approx\; u\!\left(\square_{i\in\mathcal{A}_{\boldsymbol{n}}} dp(i)\right),
\end{equation}
where $AGG_{\mathrm{out}}$ is taken as $\square$ (one of $\sum, \min, \max$) followed by $u$, both realized by a constant-size MLP per Assumptions~4.3 and 4.5 of \citet{feng2023towards}. Composing Blocks 1 and 2 thus yields a Transformer of constant attention depth, with $L(dp)$ DAG-MoE iterations sharing parameters, hidden dimension $O(K\log n)$, and parameter values bounded by $O(\mathrm{poly}(n))$, that outputs the correct DP answer at the \texttt{[ANS]} position.
\end{proof}



%% file: tex/app_implementation.tex
\section{More details on the model implementation and training}
\label{app:implementation}
In this section, we provide more details on the model implementation and training. The code for reproducibility is available at the anonymous link \url{https://anonymous.4open.science/r/DAG_MoE-1301/}.
\subsection{Pretraining}
\label{app:pretrain_impl}
\input{tables/model_configuration}
We summarize the configuration and hyperparameters of \name and the baseline in \tabref{tab:model_configuration} and elaborate below.

\textbf{Model configuration.} For both the baseline MoE and \name, we build on top of Llama3.1-8B~\citep{llama3}. We retain its tokenizer, vocabulary, attention module, and FFN design, but reduce the number of Transformer layers and the hidden dimension due to resource constraints. The MoE module follows the standard token-choice router from Switch Transformer~\citep{switchtransformer} with a balance loss, except that expert scores are computed with a Sigmoid function instead of SoftMax, which we find performs better in practice. In addition, we apply a router Z-loss to regularize the logits, following recent state-of-the-art MoE models~\citep{olmoe, stmoe, ling_mini}. For \name, we incorporate the DAG learning module on top of the standard MoE block, varying both the graph dimension $d_g$ and the depth $L$ during evaluation. For the baseline MoE, we include a shared expert to ensure parameter parity with \name. The shared expert adopts the same architecture as other experts, and its hidden dimension $d_r$ is adjusted to match the additional parameters introduced by the DAG learning module. Our implementation is based on the Hugging Face Transformers library~\citep{hf_transformer}.

\input{tables/pretrain_data_stats}

\textbf{Data.} We use the Pile~\citep{pile}, a large-scale open-source pretraining corpus, and conduct two sets of pretraining. For \name-s and \name-m, we randomly sample a subset of 10 million Pile documents and reserve 10\% as the evaluation set, resulting in about 12B tokens for training and 1.3B for evaluation. For \name-l, we use data streaming to train the model on the Pile for 37{,}500 steps, resulting in about 40B training tokens. Detailed data statistics can be found in \tabref{tab:pretrain_data_stat}. We split the original documents into sub-samples of sequence length 2048. We additionally include out-of-domain corpora for evaluating \name-l: FineWeb-Edu~\citep{fineweb_edu}, Wikipedia text~\citep{thrush2022pipeline}, and C4~\citep{c4}, randomly sampling 500{,}000 documents from each as the evaluation corpus.

\textbf{Training.} We train all models from scratch on the pretraining dataset. Optimization is performed using AdamW with default settings, and the maximum learning rate is adjusted according to model size. Following prior work~\citep{mor,minicpm,ling_mini}, we employ the WSD (warmup--stable--decay) scheduler, which not only improves convergence but also enables checkpoint reuse during training. The warmup phase is fixed to 2{,}000 steps, and the decay ratio is set to 20\%. Training is conducted on the causal language modeling task with cross-entropy loss. The coefficients for the balance loss and Z-loss are set to $0.01$ and $0.001$, respectively. We apply a weight decay of $0.1$, and the global batch size is varied across models. All experiments are run with DeepSpeed ZeRO-2~\citep{deepspeed} on 8 NVIDIA A100 GPUs. Our implementation builds on LlamaFactory~\citep{llamafactory}.

\begin{figure}[h]
    \vspace{-10pt}
    \centering
    \includegraphics[width=0.6\textwidth]{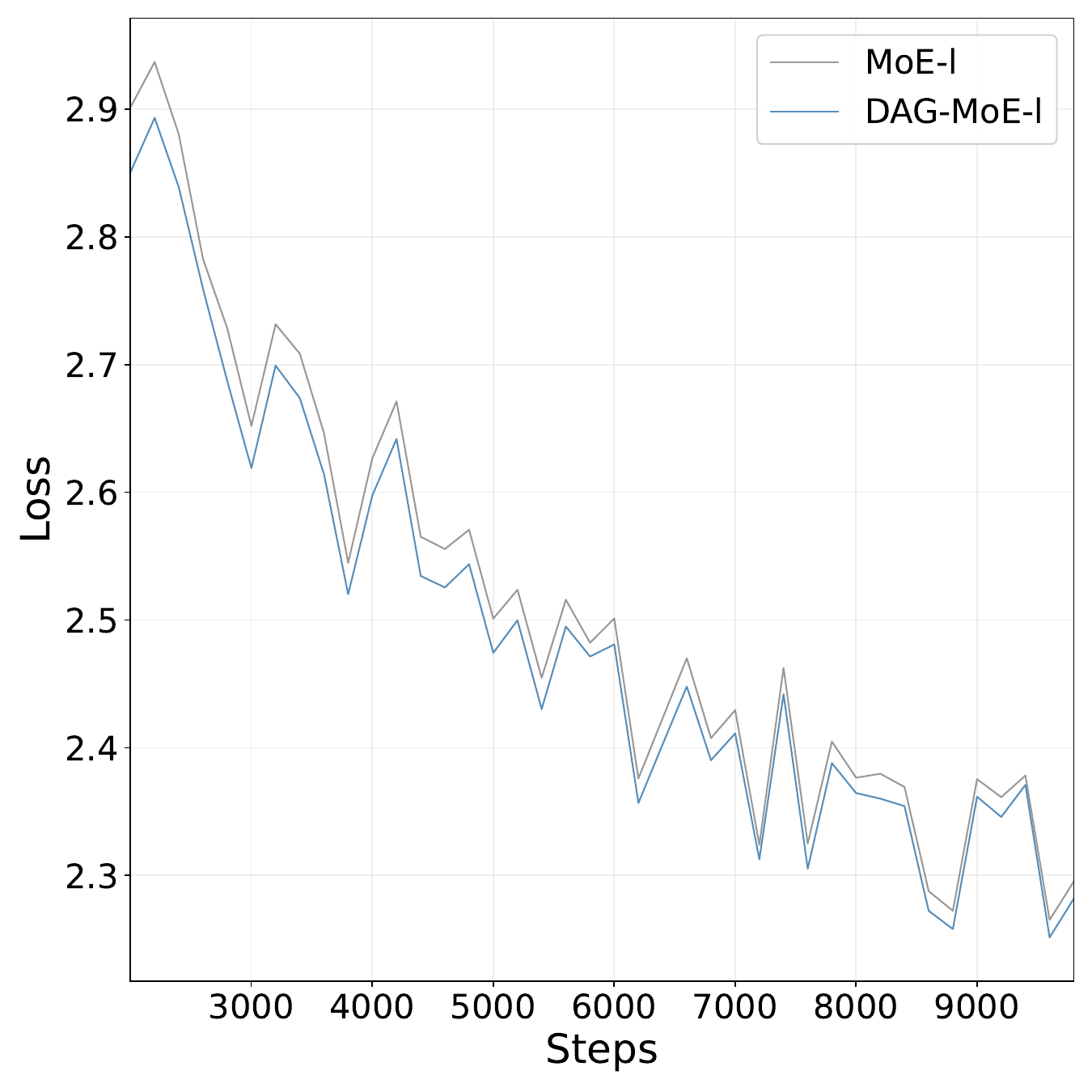}
    \caption{Pretraining loss curves of \name-l and MoE-l.}
    \label{fig:pretrain_loss}
\end{figure}

\subsection{Fine-tuning and downstream evaluation}
\label{app:ft_impl}
\textbf{Model configuration.} For fine-tuning, we directly use the pretrained \name-l and MoE-l as the base models. For \name-l, we set $d_g=256$ and $L=2$ in the DAG learning module. Correspondingly, for MoE-l, we add a shared expert with hidden size 512, so that both models have 699M parameters. See \tabref{tab:model_configuration} for the detailed model configuration.

\textbf{Data.} For fine-tuning, we use a mixture of Alpaca~\citep{alpaca}, Open-Platypus~\citep{platypus2023}, SlimOrca~\citep{mukherjee2023orca}, MathInstruct~\citep{MAmmoTH}, Open-R1-Math\footnote{\url{https://huggingface.co/datasets/open-r1/OpenR1-Math-220k}}, and MetaMathQA~\citep{yu2023metamath}, without any up- or down-sampling. All datasets are obtained from Hugging Face.

\textbf{Training.} We train the model for 3 epochs on all the data with a constant learning rate of $2\mathrm{e}{-5}$. The coefficients for balance loss and Z-loss are the same as in pretraining, while we set the weight decay to 0 to allow the model to adjust its behavior under fine-tuning. We use a batch size of 256, which results in 15{,}435 total steps for 3 epochs. All experiments are run with DeepSpeed ZeRO-2~\citep{deepspeed} on 8 NVIDIA A100 GPUs. Our implementation builds on LlamaFactory~\citep{llamafactory}.

\textbf{Evaluation.} After fine-tuning, we evaluate both \name-l and MoE-l on downstream tasks including PIQA~\citep{piqa}, ARC-e~\citep{arc}, HellaSwag~\citep{hellaswag}, GPQA~\citep{gpqa}, Lambada~\citep{lambada}, MMLU~\citep{mmlu}, and BBH~\citep{bbh}. PIQA tests physical commonsense by asking the model to choose the more plausible solution to an everyday physical task. ARC-e is the ``Easy'' subset of the AI2 Reasoning Challenge, consisting of grade-school science multiple-choice questions. HellaSwag evaluates grounded commonsense by requiring the model to select a plausible continuation for a short scenario. GPQA measures expert-level knowledge and reasoning with graduate-level multiple-choice questions. Lambada assesses broad-context language modeling via last-word prediction that requires understanding a long passage. MMLU benchmarks multi-task language understanding across 57 academic subjects in a few-shot multiple-choice format. BBH (BIG-Bench Hard) comprises 23 challenging reasoning tasks that probe compositionality, logic, and algorithmic generalization. We use a 10-shot setting for HellaSwag and a 5-shot setting for MMLU; all other benchmarks use a 0-shot setting without CoT. Evaluation is conducted through OpenCompass~\citep{opencompass}.

%% file: tables/model_configuration.tex
\begin{table}[h]
    \small
    \setlength{\tabcolsep}{2.6pt}
    \caption{The model configuration and hyper-parameter setting for \name and baseline}
    \centering
    \label{tab:model_configuration}
    \begin{tabular}{c|ccc}
        \toprule
         configuration & \name-s \& MoE-s & \name-m \& MoE-m & \name-l \& MoE-l\\
         \midrule
            model hidden size $d$ & 512 & 512 & 1024 \\
            number of layer & 4 & 6 & 8 \\
            number of attention heads & 32 & 32 & 32 \\
            number of key-value heads & 8 & 8 & 8 \\
            number of experts $N$ & 32/64 & 32 & 32  \\
            expert hidden size $d_r$ & 256/128 & 256 & 512 \\
            balance loss coefficient & 0.01 & 0.01 & 0.01 \\
            Router Z loss coefficient & 0.001 & 0.001 & 0.001 \\
            dropout ratio & 0.0 & 0.0 & 0.0 \\
            optimizer & adamW & adamW & adamW \\
            adam $\beta_1$ & 0.9 & 0.9 & 0.9 \\
            adam $\beta_2$ & 0.999 & 0.999 & 0.999 \\
            adam $\epsilon$ & 1e-8 & 1e-8 & 1e-8 \\
            weight decay & 0.1 & 0.1 & 0.1 \\
            learning rate & 5e-4 & 5e-4 & 3e-4 \\
            warmup steps & 2000 & 2000 & 2000 \\
            decay ratios & 0.2 & 0.2 & 0.2 \\
            learning rate scheduler & WSD & WSD & WSD \\
            batch size & 256 & 256 & 512 \\
            
        \bottomrule
      \end{tabular}
\end{table}

%% file: tables/pretrain_data_stats.tex
\begin{table}[]
    \small
    \setlength{\tabcolsep}{2.6pt}
    \caption{Pretraining data statistics. The 12B subset is randomly sampled with a fixed train/val split; the 40B run uses streaming, so per-sample counts and a held-out validation split do not apply.}
    \centering
    \label{tab:pretrain_data_stat}
    \begin{tabular}{c|ccc}
        \toprule
         Pile & \# samples & \# train tokens & \# val.\ tokens\\
        \midrule
        12B & $10{,}000{,}000$ & $1.25\!\times\!10^{10}$ & $1.39\!\times\!10^{9}$ \\
        40B (streaming) & --- & $3.93\!\times\!10^{10}$ & --- \\
        \bottomrule
      \end{tabular}
\end{table}

%% file: tex/app_discussion.tex
\section{Additional discussion on the experiments}

\subsection{Pretraining of \name-l}
\label{app:additional_discussion_pretrain}

We present the pretraining loss curves of \name-l and MoE-l from $1{,}000$ to $10{,}000$ steps in \figref{fig:pretrain_loss}. \name-l exhibits substantially faster early-stage convergence, with a clear loss gap that supports the claim that \name offers greater flexibility than standard MoE. As training progresses, the gap narrows and stabilizes toward the end. We hypothesize that this is because both models are relatively small and reach similar optima at this data scale. We plan to evaluate \name at larger model sizes and on larger training corpora in future work.

\subsection{Visualization of the learned DAG structure}
\label{app:viz_dag}

To inspect what \name actually learns inside its DAG learning module, we analyze a \name-s model (4 layers, $32$ experts, top-$K{=}4$, $L{=}2$) trained on 12B tokens from the Pile, evaluated on a held-out subset. Although the implementation in \eqnref{eq:dag_moe_concat}--\eqnref{eq:dag_moe_structural} does not produce explicit scalar edge weights, we use the per-edge contribution $\hat{x}^l_{(i,j)}$ in \eqnref{eq:dag_moe_structural} as a natural proxy: its $\ell_2$ norm $\|\hat{x}^l_{(i,j)}\|_2$ measures how strongly source node $j$ influences target node $i$ at depth $l$, and we treat it as the edge weight from expert $j$ to expert $i$ at iteration $l$.

\begin{figure}[ht!]
    \centering
    \includegraphics[width=0.55\linewidth]{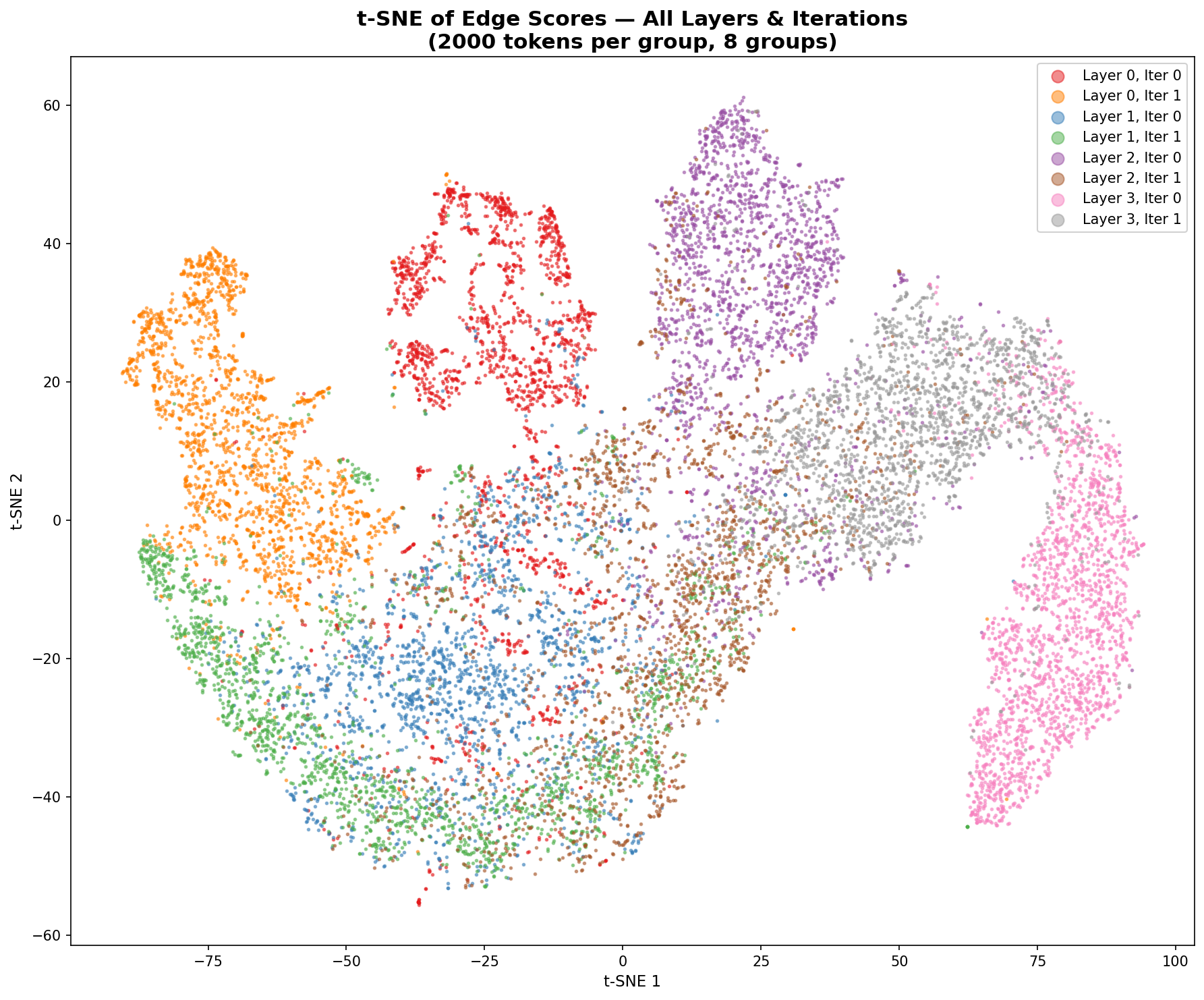}
    \caption{t-SNE projection of the flattened $K\times K$ edge-weight vector for each token, colored by the (layer, iteration) pair.}
    \label{fig:edge_tsne}
\end{figure}

\textbf{Per-token structural patterns.} To examine how the learned structure varies across tokens, we flatten the $K\times K$ edge-weight matrix at each (layer, iteration) into a single vector and project all per-token vectors with t-SNE. As shown in \figref{fig:edge_tsne}, points cluster sharply by their (layer, iteration) label, confirming that different stages learn distinct structural patterns. Within each cluster, the per-token vectors still spread over a noticeable region, indicating that the learned structure is also \emph{input-adaptive} rather than a single fixed graph reused across all tokens.

\textbf{Mean edge weights across layers.} \figref{fig:edge_heatmap} shows the mean edge-weight matrix at each (layer, iteration) pair, averaged over a randomly sampled held-out batch. Across both layers and iterations, source experts exhibit clear preferences for specific target experts rather than producing uniform aggregation, indicating that the learned DAG implements a non-trivial, non-symmetric interaction structure rather than collapsing to a permutation-invariant sum. Together with the t-SNE visualization above, these results support the picture that \name discovers diverse, layer-specific, and token-dependent aggregation structures during training.

\begin{figure}[!t]
    \centering
    \includegraphics[height=0.88\textheight,keepaspectratio]{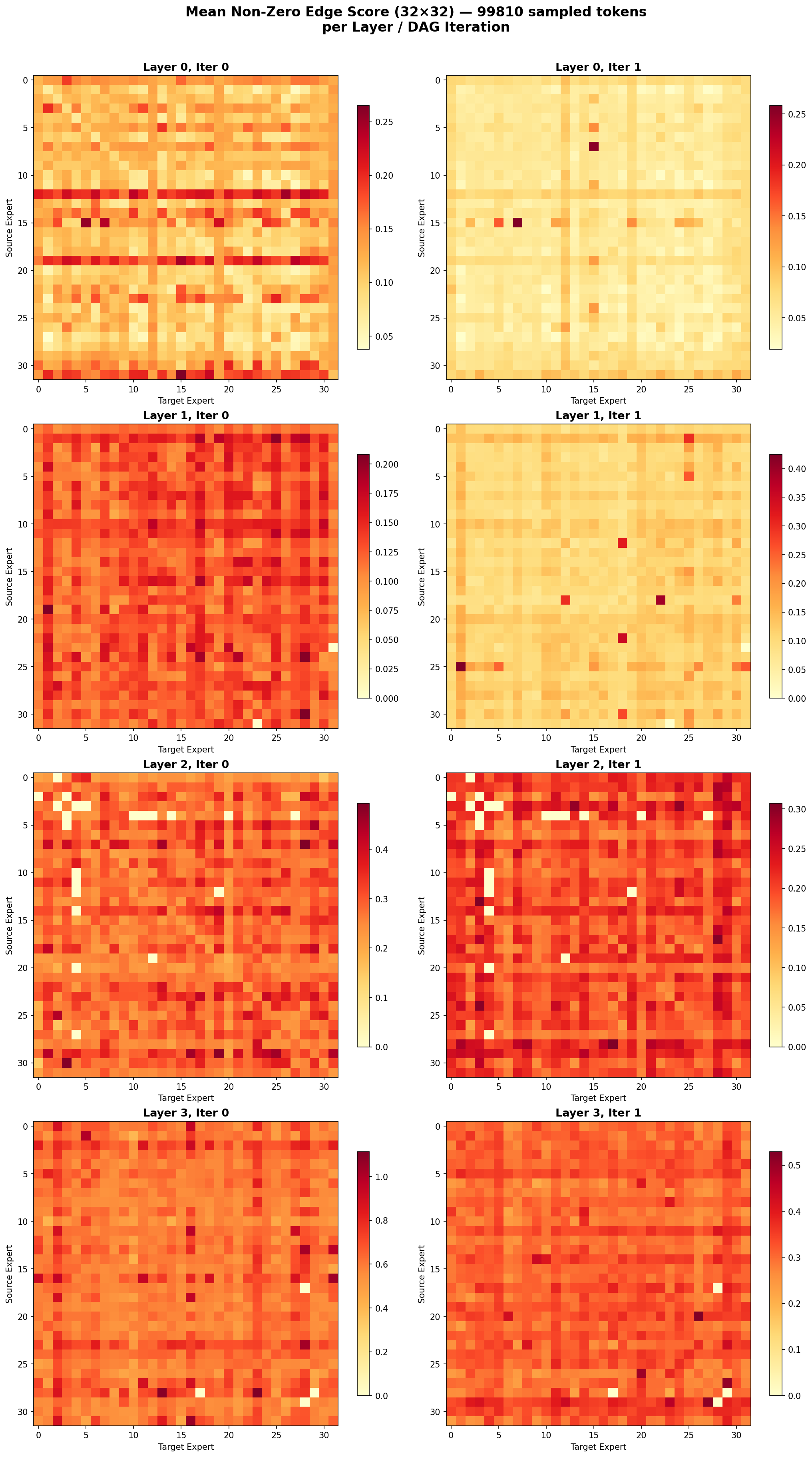}
    \caption{Mean edge-weight heatmaps across layers (rows) and DAG iterations (columns) of \name-s. Each cell shows the mean of $\|\hat{x}^l_{(i,j)}\|_2$ over a held-out batch, with rows of each heatmap indexing target experts and columns indexing source experts.}
    \label{fig:edge_heatmap}
\end{figure}